\title{Measuring Global Similarity between Texts}
\author{Uli Fahrenberg\inst1 \and Fabrizio Biondi\inst1 \and Kevin
  Corre\inst1 \and Cyrille Jegourel\inst1 \and Simon Kongsh{\o}j\inst2
  \and Axel Legay\inst1} \authorrunning{Fahrenberg, Biondi, Corre,
  Jegourel, Kongsh{\o}j, Legay}
\institute{Inria / IRISA Rennes, France \and University College of
  Northern Denmark}
\begin{document}

\maketitle

\begin{abstract}
  We propose a new similarity measure between texts which, contrary to
  the current state-of-the-art approaches, takes a global view of the
  texts to be compared.  We have implemented a tool to compute our
  textual distance and conducted experiments on several corpuses of
  texts.  The experiments show that our methods can reliably identify
  different global types of texts.
\end{abstract}

\section{Introduction}

Statistical approaches for comparing texts are used for example in
\emph{machine translation} for assessing the quality of machine
translation tools~\cite{DBLP:conf/acl/PapineniRWZ02,
  DBLP:conf/naacl/LinH03, DBLP:conf/acl/LinO04}, or in
\emph{computational linguistics} in order to establish
authorship~\cite{conf/dhcase/TomasiBCDGV13, DBLP:journals/jql/LabbeL01,
  DBLP:journals/jql/Savoy12, DBLP:journals/tois/Savoy12,
  DBLP:journals/jql/Labbe07, DBLP:journals/jql/CortelazzoNT13} or to
detect ``fake'', \ie~automatically generated, scientific
papers~\cite{DBLP:journals/scientometrics/LabbeL13,
  news/nature/Gibberish14}.

Generally speaking, these approaches consist in computing
\emph{distances}, or \emph{similarity measures}, between texts and then
using statistical methods such as, for instance, hierarchical
clustering~\cite{books/KaufmanR90} to organize the distance data and
draw conclusions.

The distances between texts which appear to be the most popular,
\eg~\cite{DBLP:journals/jql/LabbeL01, DBLP:conf/acl/PapineniRWZ02}, are
all based on measuring differences in \emph{$1$-gram frequencies}: For
each $1$-gram (token, or word) $w$ in the union of $A$ and $B$, its
absolute frequencies in both texts are calculated, \ie~$F_A( w)$ and
$F_B( w)$ are the numbers of occurrences of $w$ in $A$ and $B$,
respectively, and then the distance between $A$ and $B$ is defined to be
the sum, over all words $w$ in the union of $A$ and $B$, of the absolute
differences $| F_A( w)- F_B( w)|$, divided by the combined length of $A$
and $B$ for normalization.  When the texts $A$ and $B$ have different
length, some adjustments are needed; also, some
algorithms~\cite{DBLP:conf/acl/PapineniRWZ02, DBLP:conf/naacl/LinH03}
take into account also $2$-, $3$- and $4$-grams.

These distances are thus based on a \emph{local} model of the texts:
they measure differences of the multisets of $n$-grams for $n$ between
$1$ and maximally $4$.  Borrowing techniques from economics and
theoretical computer science, we will propose below a new distance which
instead builds on the \emph{global} structure of the texts.  It
simultaneously measures differences in occurrences of $n$-grams for all
$n$ and uses a discounting parameter to balance the influence of long
$n$-grams versus short $n$-grams.

Following the example of~\cite{DBLP:journals/scientometrics/LabbeL13},
we then use our distance to automatically identify ``fake'' scientific
papers.  These are ``papers'' which are automatically generated by some
piece of software and are hence devoid of any meaning, but which, at
first sight, have the \emph{appearance} of a genuine scientific paper.

We can show that using our distance and hierarchical clustering, we are
able to automatically identify such fake papers, also papers generated
by other methods than the ones considered
in~\cite{DBLP:journals/scientometrics/LabbeL13}, and that, importantly,
some parts of the analysis become more reliable the higher the
discounting factor.  We conclude that measuring \emph{global}
differences between texts, as per our method, can be a more reliable way
than the current state-of-the-art methods to automatically identify fake
scientific papers.  We believe that this also has applications in other
areas such machine translation or computational linguistics.


\section{Inter-textual Distances}

For the purpose of this paper, a \emph{text} $A$ is a sequence $A=( a_1,
a_2,\dots, a_{ N_A})$ of words.  The number $N_A$ is called the
\emph{length} of $A$.  As a vehicle for showing idealized properties, we
may sometimes also speak of \emph{infinite} texts, but most commonly,
texts are finite and their length is a natural number.  Note that we pay
no attention to punctuation, structure such as headings or footnotes, or
non-textual parts such as images.

\subsection{1-gram distance}

Before introducing our global distance, we quickly recall the definition
of standard $1$-gram distance, which stands out as a rather popular
distance in computational linguistics and other
areas~\cite{DBLP:journals/scientometrics/LabbeL13,
  DBLP:journals/jql/Savoy12, DBLP:journals/tois/Savoy12,
  DBLP:journals/jql/Labbe07, DBLP:journals/jql/CortelazzoNT13,
  conf/dhcase/TomasiBCDGV13, DBLP:journals/jql/LabbeL01,
  DBLP:journals/lalc/LabbeL06}.

For a text $A=( a_1, a_2,\dots, a_{ N_A})$ and a word $w$, the natural
number $F_A( w)=|\{ i\mid a_i= w\}|$ is called the \emph{absolute
  frequency} of $w$ in $A$: the number of times (which may be $0$) that
$w$ appears in $A$.  We say that $w$ is \emph{contained} in $A$ and
write $w\in A$ if $F_A( w)\ge 1$.

For texts $A=( a_1, a_2,\dots, a_{ N_A})$, $B=( b_1, b_2,\dots, b_{
  N_B})$, we write $A\circ B=( a_1,\dots, a_{ N_A}, b_1,\dots, b_{
  N_B})$ for their \emph{concatenation}.  With this in place, the
\emph{1-gram distance} between texts $A$ and $B$ \emph{of equal length}
is defined to be
\begin{equation*}
  \dlabbe( A, B)= \frac{ \sum_{ w\in A\circ B}| F_A( w)- F_B( w)|}{ N_A+
    N_B}\,,
\end{equation*}
where $| F_A( w)- F_B( w)|$ denotes the absolute difference between the
absolute frequencies $F_A( w)$ and $F_B( w)$.

For texts $A$ and $B$ which are not of equal length, scaling is used:
for $N_A< N_B$, one lets
\begin{equation*}
  \dlabbe( A, B)= \frac{ \sum_{ w\in A\circ B}| F_A( w)- F_B( w) \frac{
      N_A}{ N_B}|}{ 2 N_A}\,.
\end{equation*}

By counting occurrences of $n$-grams instead of 1-grams, similar
$n$-gram distances may be defined for all $n\ge 1$.  The BLEU
distance~\cite{DBLP:conf/acl/PapineniRWZ02} for example, popular for
evaluation of machine translation, computes $n$-gram distance for $n$
between $1$ and $4$.

\subsection{Global distance}

To compute our global inter-textual distance, we do not compare word
frequencies as above, but \emph{match $n$-grams} in the two texts
\emph{approximately}.  Let $A=( a_1, a_2,\dots, a_{ N_A})$ and $B=( b_1,
b_2,\dots, b_{ N_B})$ be two texts, where we make no assertion about
whether $N_A< N_B$, $N_A= N_B$ or $N_A> N_B$.  Define an indicator
function $\delta_{ i, j}$, for $i\in\{ 1,\dots, N_A\}$, $j\in\{ 1,\dots,
N_B\}$, by
\begin{equation}
  \label{eg:delta}
  \delta_{ i, j}=
  \begin{cases}
    0 &\text{if } a_i= b_j\,,\\
    1 &\text{otherwise}
  \end{cases}
\end{equation}
(this is the \emph{Kronecker delta} for the two sequences $A$ and $B$).
The symbol $\delta_{ i, j}$ indicates whether the $i$-th word $a_i$ in
$A$ is the same as the $j$-th word $b_j$ in $B$.  For ease of notation,
we extend $\delta_{ i, j}$ to indices above $i, j$, by declaring
$\delta_{ i, j}= 1$ if $i> N_A$ or $j> N_B$.

Let $\lambda\in \Real$, with $0\le \lambda< 1$, be a \emph{discounting
  factor}.  Intuitively, $\lambda$ indicates how much weight we give to
the length of $n$-grams when matching texts: for $\lambda= 0$, we match
$1$-grams only (see also Theorem~\ref{th:labbe} below), and the higher
$\lambda$, the longer the $n$-grams we wish to match.  Discounting is a
technique commonly applied for example in economics, when gauging the
long-term effects of economic decisions.  Here we remove it from its
time-based context and apply it to \emph{$n$-gram length} instead: We
define the \emph{position match} from any position index pair $( i, j)$
in the texts by
\begin{equation}
  \label{eq:phrasedist}
  \begin{aligned}
    \dphrase( i, j, \lambda) &=  \delta_{ i, j}+
    \lambda \delta_{ i+ 1, j+ 1}+ \lambda^2 \delta_{ i+ 2, j+ 2}+\dotsb \\
    &= \sum_{ k= 0}^\infty \lambda^k \delta_{ i+ k, j+ k}\,.
  \end{aligned}
\end{equation}

This measures how much the texts $A$ and $B$ ``look alike'' when
starting with the tokens $a_i$ in $A$ and $b_j$ in $B$.  Note that it
takes values between $0$ (if $a_i$ and $b_j$ are the starting points for
two equal infinite sequences of tokens) and $\frac1{ 1- \lambda}$.
Intuitively, the more two token sequences are alike, and the later they
become different, the smaller their distance.
Table~\ref{ta:phrase_distance} shows a few examples of position match
calculations.

\begin{table}[tp]
  \centering
  \caption{%
    \label{ta:phrase_distance}
    Position matches, starting from index pair $(1, 1)$ and scaled by
    $1- \lambda$, of different example texts, for general discounting
    factor $\lambda$ and for $\lambda= .8$.  Note that the last two
    example texts are infinite.
  }
  \setlength{\tabcolsep}{3pt}
  \begin{tabular}{c|c|l|l}
    Text $A$ & Text $B$ & $( 1- \lambda)\dphrase( 1, 1, \lambda)$ &
    $\lambda= .8$ \\\hline
    ``man'' & ``dog'' & $1$ & $1$ \\
    ``dog'' & ``dog'' & $\lambda$ & $.8$ \\
    ``man bites dog'' & ``man bites dog'' & $\lambda^3$ & $.51$ \\
    ``man bites dog'' & ``dog bites man'' & $1- \lambda+ \lambda^2$ &
    $.84$ \\
    ``the quick brown fox \qquad\qquad& ``the quick white
    fox\qquad\qquad\quad & $\lambda^2- \lambda^3+ \lambda^4- \lambda^6$ \\
    jumps over the lazy dog'' &  crawls under the high
    dog'' & \quad\ $\mathop+ \lambda^7- \lambda^8+ \lambda^9$ & $.45$ \\
    ``me me me me...'' & ``me me me me...'' & $0$ & $0$
  \end{tabular}
\end{table}

\begin{table}[bp]
  \centering
  \caption{%
    \label{ta:matrix}
    Position match matrix example, with  discounting factor $\lambda=
    .8$.
  }
  \setlength{\tabcolsep}{3pt}
  \begin{tabular}{c|cccccccc}
    & the & quick & fox & jumps & over & the & lazy & dog \\\hline
    the & 0.67 & 1.00 & 1.00 & 1.00 & 1.00 & 0.64 & 1.00 & 1.00 \\
    lazy & 1.00 & 0.84 & 1.00 & 1.00 & 1.00 & 1.00 & 0.80 & 1.00 \\
    fox & 1.00 & 1.00 & 0.80 & 1.00 & 1.00 & 1.00 & 1.00 & 1.00 
  \end{tabular}
\end{table}

This gives us an $N_A$-by-$N_B$ matrix $\Dphrase( \lambda)$ of position
matches; see Table~\ref{ta:matrix} for an example.  We now need to
consolidate this matrix into \emph{one} global distance value between
$A$ and $B$.  Intuitively, we do this by averaging over position
matches: for each position $a_i$ in $A$, we find the position $b_j$ in
$B$ which best matches $a_i$, \ie~for which $\dphrase( i, j, \lambda)$
is minimal, and then we average over these matchings.

Formally, this can be stated as an \emph{assignment problem}: Assuming
for now that $N_A= N_B$, we want to find a matching of indices $i$ to
indices $j$ which minimizes the sum of the involved $\dphrase( i, j)$.
Denoting by $S_{ N_A}$ the set of all permutations of indices $\{
1,\dotsc, N_A\}$ (the \emph{symmetric group} on $N_A$ elements), we hence
define
\begin{equation*}
  d_2( A, B, \lambda)=( 1- \lambda)\frac1{ N_A} \min_{ \phi\in S_{ N_A}}
  \sum_{ i= 1}^{ N_A} \dphrase( i, \phi( i), \lambda)\,.
\end{equation*}

This is a conservative extension of $1$-gram distance, in the sense that
for discounting factor $\lambda= 0$ we end up computing $\dlabbe$:

\begin{theorem}
  \label{th:labbe}
  For all texts $A$, $B$ with equal lengths, $d_2( A, B, 0)= \dlabbe( A,
  B)$.
\end{theorem}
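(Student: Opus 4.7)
The plan is to evaluate both quantities on a common, purely combinatorial form, namely the number of index positions that can be ``matched'' between $A$ and $B$, and then to show they are equal.

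First, I would specialize the definition of $\dphrase$ at $\lambda = 0$. With the convention $0^0 = 1$, every term $\lambda^k \delta_{i+k,j+k}$ with $k\ge 1$ vanishes, so $\dphrase(i, j, 0) = \delta_{i, j}$. Substituting into the definition of $d_2$, and using $1-\lambda = 1$, gives
\begin{equation*}
  d_2(A, B, 0) = \frac{1}{N_A}\, \min_{\phi\in S_{N_A}} \sum_{i=1}^{N_A} \delta_{i, \phi(i)}
              = \frac{1}{N_A}\, \min_{\phi\in S_{N_A}} \bigl|\{ i : a_i \ne b_{\phi(i)}\}\bigr|.
\end{equation*}
So $N_A\, d_2(A, B, 0)$ is the minimum number of mismatches achievable by a permutation $\phi$, equivalently $N_A$ minus the maximum number of agreements.

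The key combinatorial step is to identify this maximum. I claim that
\begin{equation*}
  \max_{\phi\in S_{N_A}} \bigl|\{ i : a_i = b_{\phi(i)}\}\bigr|
  \;=\; \sum_{w\in A\circ B} \min\bigl(F_A(w), F_B(w)\bigr).
\end{equation*}
The upper bound is immediate: for each word $w$, any permutation can pair positions carrying $w$ on both sides at most $\min(F_A(w), F_B(w))$ times. The lower bound is achieved by a greedy construction: for each $w$, arbitrarily pair $\min(F_A(w), F_B(w))$ of the positions $\{i : a_i = w\}$ with the same number of positions $\{j : b_j = w\}$; since $N_A = N_B$, the remaining unmatched positions on either side have equal cardinality and can be completed into a permutation arbitrarily. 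This is the step that deserves most care; the rest is routine algebra.

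Finally, I would convert the $\min$ expression into absolute differences by using the identity $|x - y| = x + y - 2\min(x, y)$ together with $\sum_w F_A(w) = N_A = N_B = \sum_w F_B(w)$:
\begin{equation*}
  \sum_{w\in A\circ B} |F_A(w) - F_B(w)|
  = N_A + N_B - 2\sum_{w\in A\circ B} \min(F_A(w), F_B(w))
  = 2\bigl( N_A - \textstyle\sum_w \min(F_A(w), F_B(w)) \bigr).
\end{equation*}
Combining with the previous step yields $\sum_w |F_A(w) - F_B(w)| = 2 N_A\, d_2(A, B, 0)$, and dividing by $N_A + N_B = 2 N_A$ gives exactly $\dlabbe(A, B)$, as required. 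The main obstacle, and the only nontrivial ingredient, is the bipartite-matching argument identifying the maximum number of permutation agreements with $\sum_w \min(F_A(w), F_B(w))$.
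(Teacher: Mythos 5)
Your proof is correct, but it reaches the key identity by a genuinely different route than the paper. Both arguments come down to showing that $\min_{\phi\in S_{N_A}}|\{i: a_i\ne b_{\phi(i)}\}| = \tfrac12\sum_{w\in A\circ B}|F_A(w)-F_B(w)|$. The paper fixes an optimal permutation $\phi$ and runs an \emph{exchange argument}: if position $i$ is mismatched and some occurrence of $w=a_i$ in $B$ were also mismatched, swapping the two assignments would improve $\phi$; hence each mismatch witnesses a surplus occurrence of $a_i$ in $A$ (and of $b_{\phi(i)}$ in $B$), and summing these surpluses over all words yields exactly twice the mismatch count. You instead compute the optimal value directly: the maximum number of agreements is at most $\sum_w\min(F_A(w),F_B(w))$ by injectivity of $\phi$, the bound is attained by a greedy word-by-word pairing that can be completed to a permutation because $N_A=N_B$, and the identity $|x-y|=x+y-2\min(x,y)$ converts the result into the numerator of $\dlabbe$. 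Your version is more self-contained and makes the role of the equal-length hypothesis explicit (it is what allows the greedy pairing to be completed), whereas the paper's exchange argument avoids the $\min$ reformulation but relies on the slightly delicate observation that for each word at most one of the two surplus sets is nonempty. One small point worth stating explicitly in your write-up: the evaluation $\dphrase(i,j,0)=\delta_{i,j}$ uses the convention $0^0=1$ in Eq.~\eqref{eq:phrasedist}, which is also how the paper reads its own definition.
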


\begin{proof}
  For $\lambda= 0$, the entries in the phrase distance matrix are
  $\dphrase( i, j, 0)= \delta_{ i, j}$.  Hence a \emph{perfect match} in
  $\Dphrase$, with $\sum_{ i= 1}^{ N_A} \dphrase( i, \phi( i), 0)= 0$,
  matches each word in $A$ with an equal word in $B$ and vice versa.
  This is possible if, and only if, $F_A( w)= F_B( w)$ for each word
  $w$.  Hence $d_2( A, B, 0)= 0$ iff $\dlabbe( A, B)= 0$.  The proof of
  the general case is in appendix. \qed
\end{proof}


There are, however, some problems with the way we have defined $d_2$.
For the first, the assignment problem is computationally rather
expensive: the best know algorithm (the \emph{Hungarian
  algorithm}~\cite{journals/nrl/Kuhn55}) runs in time cubic in the size
of the matrix, which when comparing large texts may result in
prohibitively long running times.  Secondly, and more important, it is
unclear how to extend this definition to texts which are not of equal
length, \ie~for which $N_A\ne N_B$.  (The scaling approach does not work
here.)

Hence we propose a different definition which has shown to work well in
practice, where we abandon the idea that we want to match phrases
\emph{uniquely}.  In the definition below, we simply match every phrase
in $A$ with its best equivalent in $B$, and we do not take care whether
we match two different phrases in $A$ with the same phrase in $B$.
Hence,
\begin{equation*}
  d_3( A, B, \lambda)= \frac1{ N_A} \sum_{ i= 1}^{ N_A} \min_{ j= 1,\dotsc, N_B}
  \dphrase( i, j, \lambda)\,.
\end{equation*}

Note that $d_3( A, B, \lambda)\le d_2( A, B, \lambda)$, and that
contrary to $d_2$, $d_3$ is \emph{not symmetric}.  We can fix this by
taking as our final distance measure the symmetrization of $d_3$:
\begin{equation*}
  d_4( A, B, \lambda)= \max( d_3( A, B, \lambda), d_3( B, A, \lambda))\,.
\end{equation*}

\section{Implementation}

We have written a \texttt C~program and some \texttt{bash} helper
scripts which implement the computations above.  All our software is
available at~\url{http://textdist.gforge.inria.fr/}.

The \texttt C~program, \texttt{textdist.c}, takes as input a list of
\texttt{txt}-files $A_1, A_2,\dotsc, A_k$ and a discounting factor
$\lambda$ and outputs $d_4( A_i, A_j, \lambda)$ for all pairs $i, j=
1,\dotsc, k$.  With the current implementation, the \texttt{txt}-files
can be up to 15,000 words long, which is more than enough for all texts
we have encountered.  On a standard 3-year-old business laptop
(Intel\textregistered\ Core\texttrademark\ i5 at 2.53GHz$\times$4),
computation of $d_4$ for takes less than one second for each pair of
texts.

We preprocess texts to convert them to \texttt{txt}-format and remove
non-word tokens.  The \texttt{bash}-script \texttt{preprocess-pdf.sh}
takes as input a \texttt{pdf}-file and converts it to a text file, using
the \texttt{poppler} library's \texttt{pdftotext} tool.  Afterwards,
\texttt{sed} and \texttt{grep} are used to convert whitespace to
newlines and remove excessive whitespace; we also remove all ``words''
which contain non-letters and only keep words of at least two letters.

The \texttt{bash}-script \texttt{compareall.sh} is used to compute
mutual distances for a corpus of texts.  Using \texttt{textdist.c} and
taking $\lambda$ as input, it computes $d_4( A, B, \lambda)$ for all
texts (\texttt{txt}-files) $A$, $B$ in a given directory and outputs
these as a matrix.  We then use \texttt{R} and \texttt{gnuplot} for
statistical analysis and visualization.

We would like to remark that all of the above-mentioned tools are free
or open-source software and available without charge.  One often forgets
how much science has come to rely on this free-software infrastructure.

\section{Experiments}

We have conducted two experiments using our software.  The data sets on
which we have based these experiments are available on request.

\subsection{Types of texts used}

We have run our experiments on papers in computer science, both genuine
papers and automatically generated ``fake'' papers.  As to the genuine
papers, for the first experiment, we have used 42 such papers from
within theoretical computer science, 22 from the proceedings of the
FORMATS 2011 conference~\cite{DBLP:conf/formats/2011} and 20 others
which we happened to have around.  For the second experiment, we
collected 100 papers from \url{arxiv.org}, by searching their Computer
Science repository for authors named ``Smith'' (\texttt{arxiv.org}
strives to prevent bulk paper collection), of which we had to remove
three due to excessive length (one ``status report'' of more than 40,000
words, one PhD thesis of more than 30,000 words, and one ``road map'' of
more than 20,000 words).

We have employed three methods to collect automatically generated
``papers''.  For the first experiment, we downloaded four fake
publications by ``Ike Antkare''.  These are out of a set of 100 papers
by the same ``author'' which have been generated, using the SCIgen paper
generator, for another experiment~\cite{journals/issi/labbe10}.  For the
purpose of this other experiment, these papers all have the same
bibliography, each of which references the other 99 papers; hence not to
skew our results (and like was done
in~\cite{DBLP:journals/scientometrics/LabbeL13}), we have stripped their
bibliography.

SCIgen\footnote{\url{http://pdos.csail.mit.edu/scigen/}} is an automatic
generator of computer science papers developed in 2005 for the purpose
of exposing ``fake'' conferences and journals (by submitting generated
papers to such venues and getting them accepted).  It uses an elaborate
grammar to generate random text which is devoid of any meaning, but
which to the untrained (or inattentive) eye looks entirely legitimate,
complete with abstract, introduction, figures and bibliography.  For the
first experiment, we have supplemented our corpus with four SCIgen
papers which we generated on their website.  For the second experiment,
we modified SCIgen so that we could control the length of generated
papers and then generated 50 papers.

For the second experiment, we have also employed another paper generator
which works using a simple Markov chain model.  This program,
\texttt{automogensen}\footnote{\url{http://www.kongshoj.net/automogensen/}},
was originally written to expose the lack of meaning of many of a
certain Danish political commentator's writings, the challenge being to
distinguish genuine \textit{Mogensen} texts from ``fake''
\texttt{automogensen} texts.  For our purposes, we have modified
\texttt{automogensen} to be able to control the length of its output and
fed it with a 248,000-word corpus of structured computer science text
(created by concatenating all 42 genuine papers from the first
experiment), but otherwise, its functionality is rather simple: It
randomly selects a 3-word starting phrase from the corpus and then,
recursively, selects a new word from the corpus based on the last three
words in its output and the distribution of successor words of this
three-word phrase in the corpus.

\subsection{First experiment}

\begin{figure}[tbp]
  \hspace*{-2em}
  \includegraphics[width=.55\linewidth, trim=30 70 50 70, clip, angle=-90]{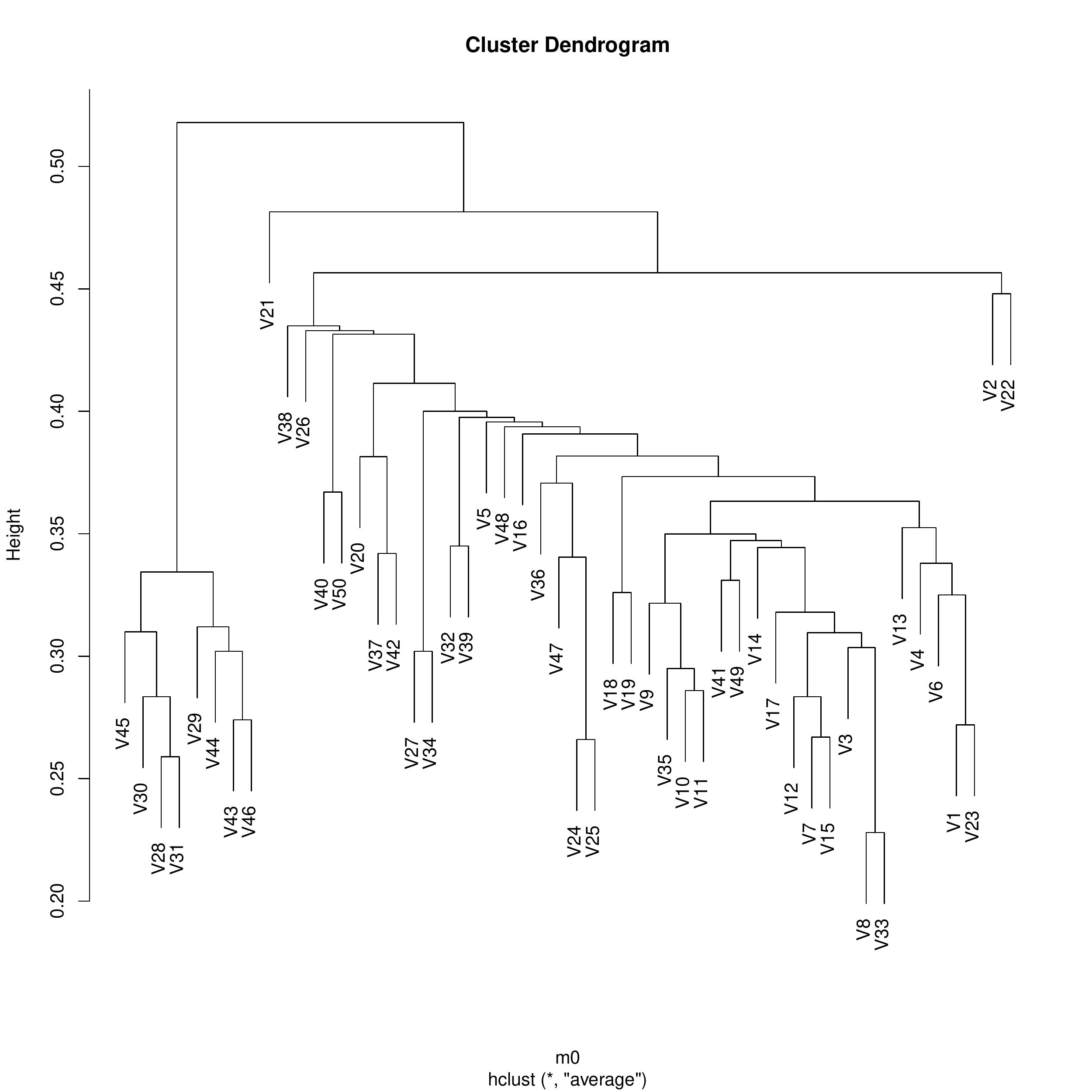}
  \hfill
  \includegraphics[width=.55\linewidth, trim=30 70 50 70, clip, angle=-90]{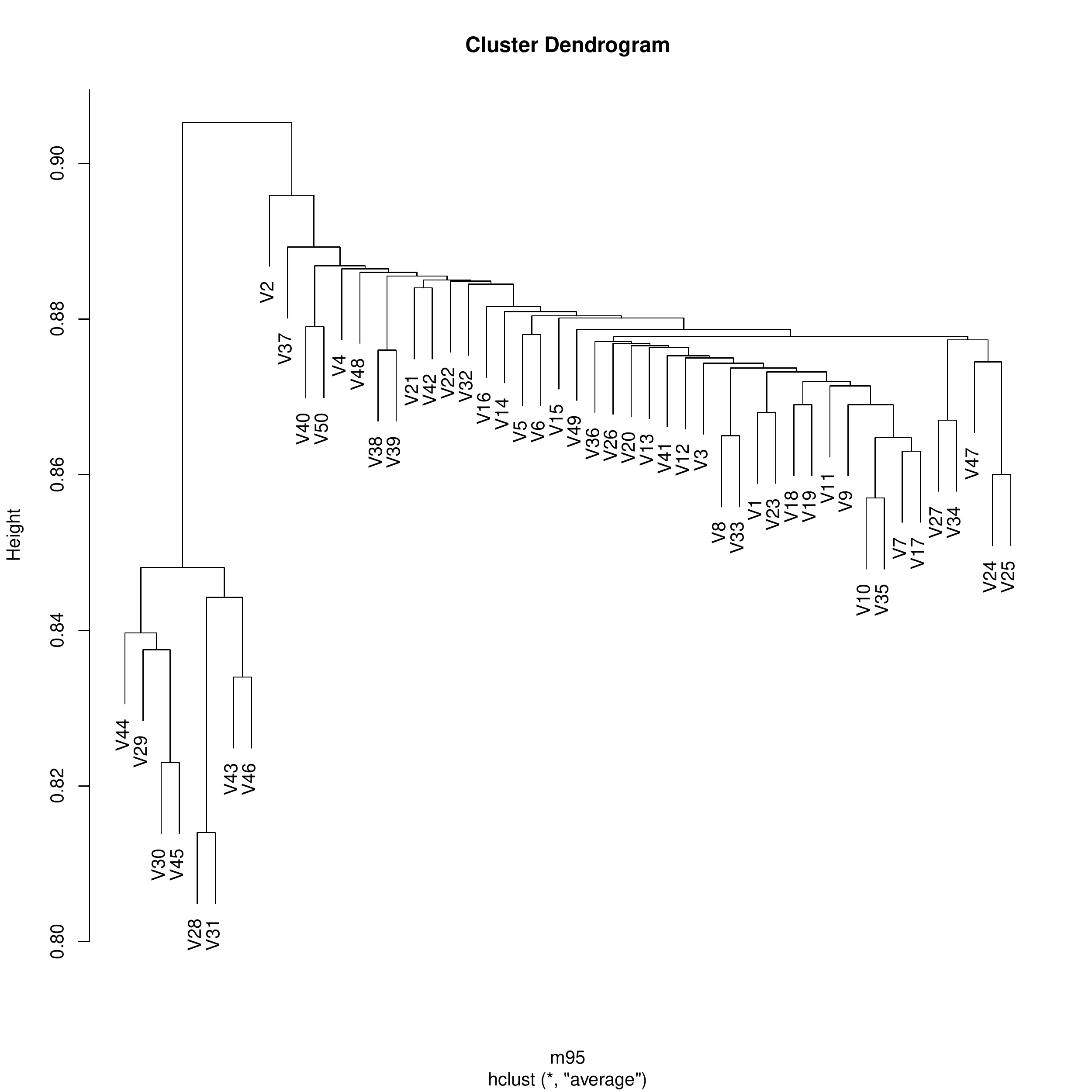}
  \hspace*{-2em}
  \caption{%
    \label{fi:dendro-1-avg.m}
    Dendrograms for Experiment 1, using average clustering, for
    discounting factors $0$ (left) and $.95$ (right), respectively.
    Fake papers are numbered 28-31 (Antkare) and 43-46 (SCIgen), the
    others are genuine.}
\end{figure}

The first experiment was conducted on 42 genuine papers of lengths
between 3,000 and 11,000 words and 8 fake papers of lengths between 1500
and 2200 words.  Figure~\ref{fi:dendro-1-avg.m} shows two dendrograms
with average clustering created from the collected distances; more
dendrograms are available in appendix.  The left dendrogram was computed
for discounting factor $\lambda= 0$, \ie~word matching only.  One
clearly sees the fake papers grouped together in the top cluster and the
genuine papers in cluster below.  In the right dendrogram, with very
high discounting ($\lambda= .95$), this distinction is much more clear;
here, the fake cluster is created (at height $.85$) while all the
genuine papers are still separate.  The dendrograms in
Fig.~\ref{fi:dendro-1-ward.m}, created using Ward clustering, clearly
show that one should distinguish the data into \emph{two} clusters, one
which turns out to be composed only of fake papers, the other only of
genuine papers.

\begin{figure}[tbp]
  \hspace*{-2em}
  \includegraphics[width=.55\linewidth, trim=30 70 50 50, clip, angle=-90]{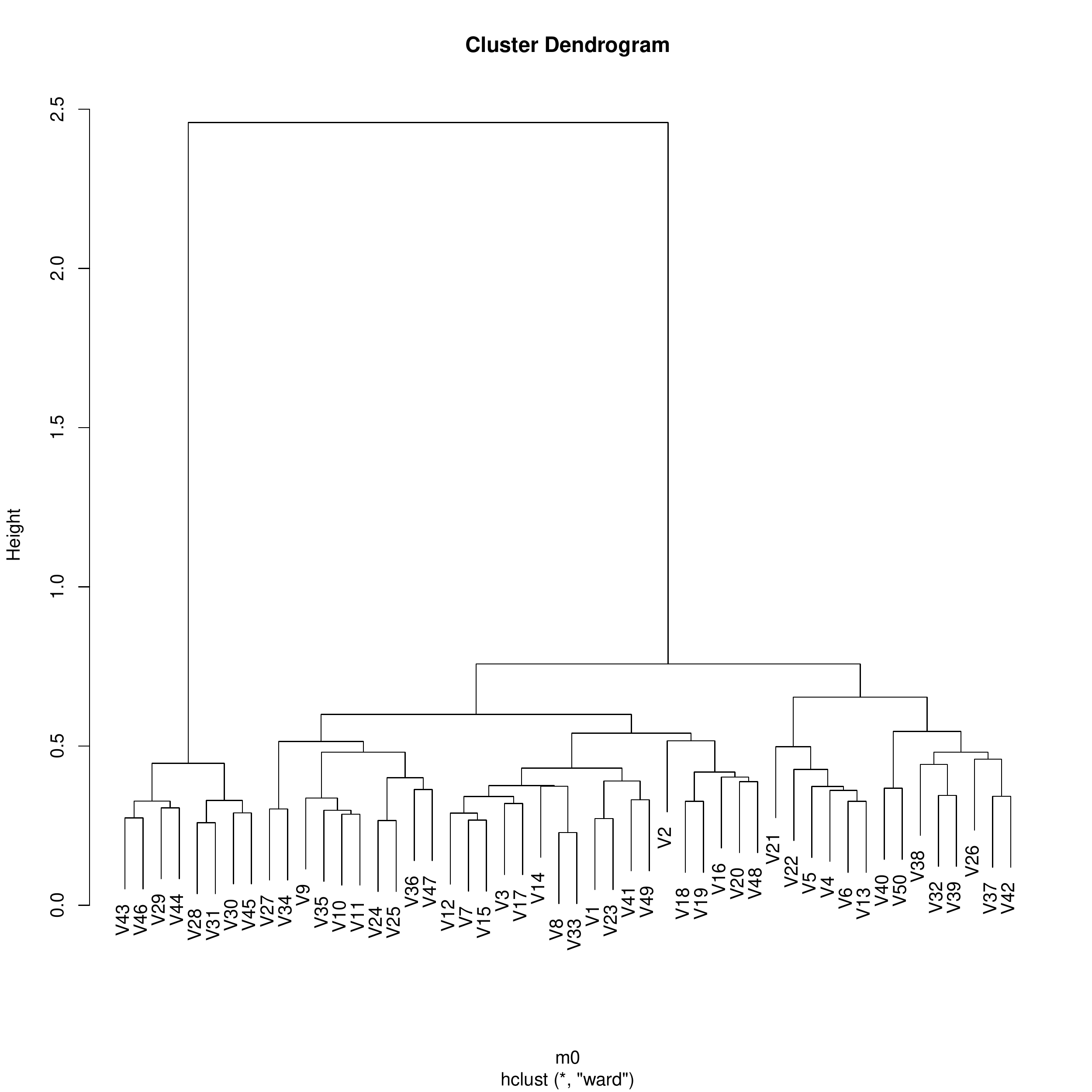}
  \hfill
  \includegraphics[width=.55\linewidth, trim=30 70 50 70, clip, angle=-90]{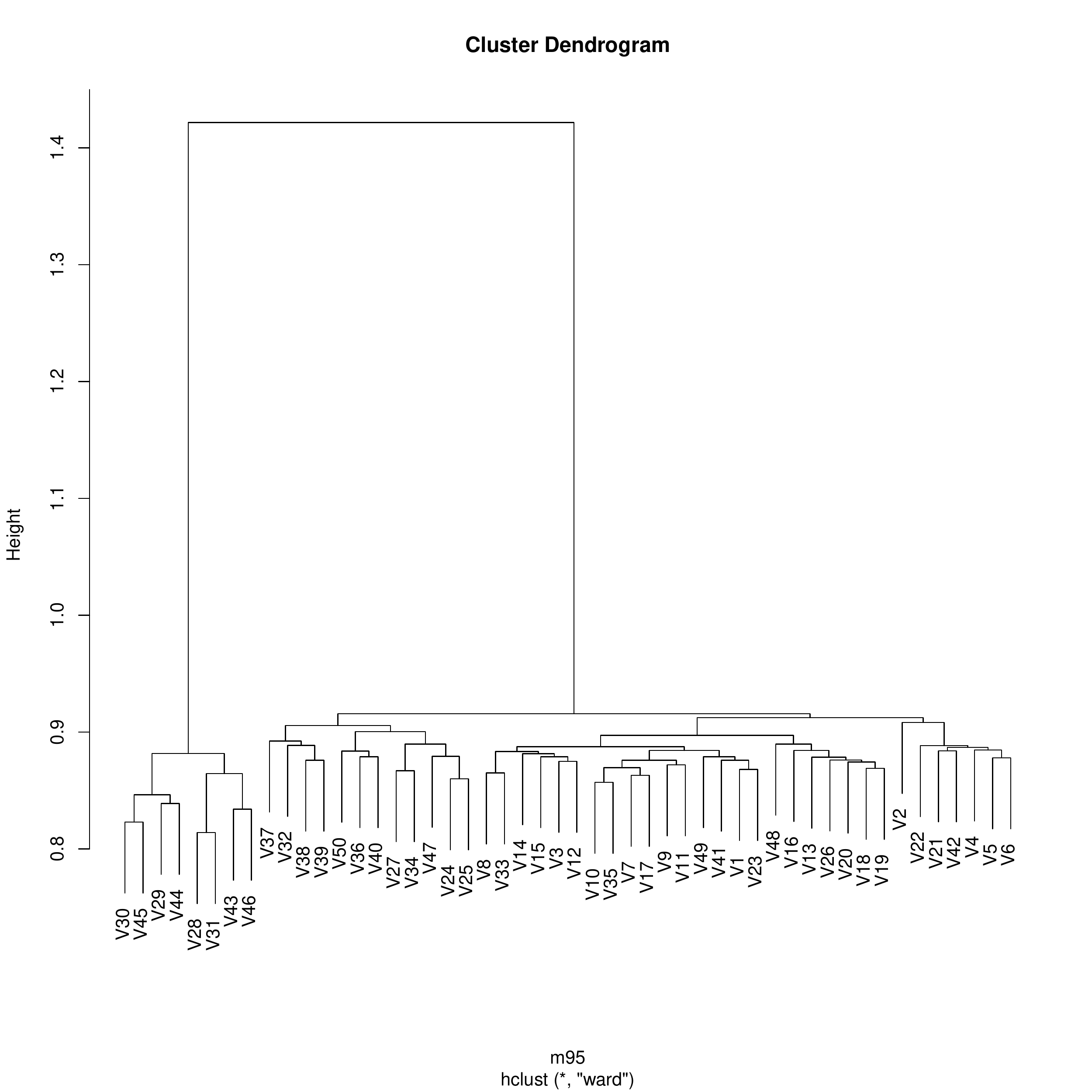}
  \hspace*{-2em}
  \caption{%
    \label{fi:dendro-1-ward.m}
    Dendrograms for Experiment 1, using Ward clustering, for discounting
    factors $0$ (left) and $.95$ (right), respectively.  Fake papers are
    numbered 28-31 (Antkare) and 43-46 (SCIgen), the others are
    genuine.}
\end{figure}

We want to call attention to two other interesting observations which
can be made from the dendrograms in Fig.~\ref{fi:dendro-1-avg.m}.
First, papers~2, 21 and 22 seem to stick out from the other genuine
papers.  While all other genuine papers are technical papers from within
theoretical computer science, these three are not.
Paper~2~\cite{DBLP:conf/formats/Haverkort11} is a non-technical position
paper, and papers~21~\cite{DBLP:conf/formats/SankaranarayananHL11}
and~22~\cite{DBLP:conf/formats/KharmehEM11} are about applications in
medicine and communication.  Note that the $\lambda= .95$ dendrogram
more clearly distinguishes the position
paper~\cite{DBLP:conf/formats/Haverkort11} from the others.

Another interesting observation concerns
papers~8~\cite{DBLP:conf/formats/BassetA11} and~33~\cite{hal/AsarinD09}.
These papers share an author (E.~Asarin) and are within the same
specialized area (topological properties of timed automata), but
published two years apart.  When measuring only word distance, \ie~with
$\lambda= 0$, these papers have the absolutely lowest distance, $.23$,
even below any of the fake papers' mutual distances, but increasing the
discounting factor increases their distance much faster than any of the
fake papers' mutual distances.  At $\lambda= .95$, their distance is
$.87$, above any of the fake papers' mutual distances.  A conclusion can
be that these two papers may have \emph{word} similarity, but they are
distinct in their \emph{phrasing}.

\begin{table}[tbp]
  \centering
  \caption{%
    \label{ta:disc-vs-dist}
    Minimal and maximal distances between different types of papers
    depending on the discounting factor.}
  \begin{tabular}{c|cccccccccccc}
    type & discounting & 0 & .1 & .2 & .3 & .4 & .5 & .6 & .7 & .8 & .9
    & .95 \\\hline 
    genuine / genuine & min & .23 & .26 & .30 & .35 & .40 & .45 & .52 &
    .59 & .68 & .79 & .86 
    \\
    & max & .55 & .56 & .57 & .59 & .61 & .64 & .67 & .72 & .78 & .85 &
    .90 \\\hline
    fake / fake & min & .26 & .28 & .31 & .35 & .39 & .43 & .49 & .55 &
    .63 & .73 & .81 \\
    & max & .38 & .40 & .43 & .46 & .49 & .53 & .58 & .64 & .71 & .80 &
    .86 \\\hline
    fake / genuine & min & .44 & .46 & .49 & .52 & .55 & .59 & .64 & .70
    & .76 & .84 & .89 \\
    & max & .58 & .60 & .62 & .64 & .66 & .68 & .72 & .76 & .80 & .87 & .92
  \end{tabular}
\end{table}

Finally, we show in Table~\ref{ta:disc-vs-dist} (see also
Fig.~\ref{fi:disc-vs-dist-1} in the appendix for a visualization) how
the mutual distances between the 50 papers evolve depending on the
discounting factor.  One can see that at $\lambda= 0$, the three types
of mutual distances are overlapping, whereas at $\lambda= .95$, they are
almost separated into three bands: .81-.86 for fake papers, .86-.90 for
genuine papers, and .89-.92 for comparing genuine with fake papers.

Altogether, we conclude from the first experiment that our inter-textual
distance can achieve a safe separation between genuine and fake papers
in our corpus, and that the separation is stronger for higher
discounting factors.

\subsection{Second experiment}

The second experiment was conducted on 97 papers from
\texttt{arxiv.org}, 50 fake papers generated by a modified SCIgen
program, and 50 fake papers generated by \texttt{automogensen}.  The
\texttt{arxiv} papers were between 1400 and 15,000 words long, the
SCIgen papers between 2700 and 12,000 words, and the
\texttt{automogensen} papers between 4,000 and 10,000 words.  The
distances were computed for discounting factors $0$, $.4$, $.8$ and
$.95$; with our software, computations took about four hours for each
discounting factor.

We show the dendrograms using average clustering in
Figs.~\ref{fi:dendro-2-avg.0} to~\ref{fi:dendro-2-avg.95} in the
appendix; they appear somewhat inconclusive.  One clearly notices the
SCIgen and \texttt{automogensen} parts of the corpus, but the
\texttt{arxiv} papers have wildly varying distances and disturb the
dendrogram.  One interesting observation is that with discounting
factor~$0$, the \texttt{automogensen} papers have small mutual distances
compared to the \texttt{arxiv} corpus, comparable to the SCIgen papers'
mutual distances, whereas with high discounting ($.95$), the
\texttt{automogensen} papers' mutual distances look more like the
\texttt{arxiv} papers'.  Note that the difficulties in clustering appear
also with discounting factor $0$, hence also when only matching words.

\begin{figure}[t]
  \centering
  \includegraphics[width=1\linewidth, trim=30 70 50 70, clip, angle=-90]{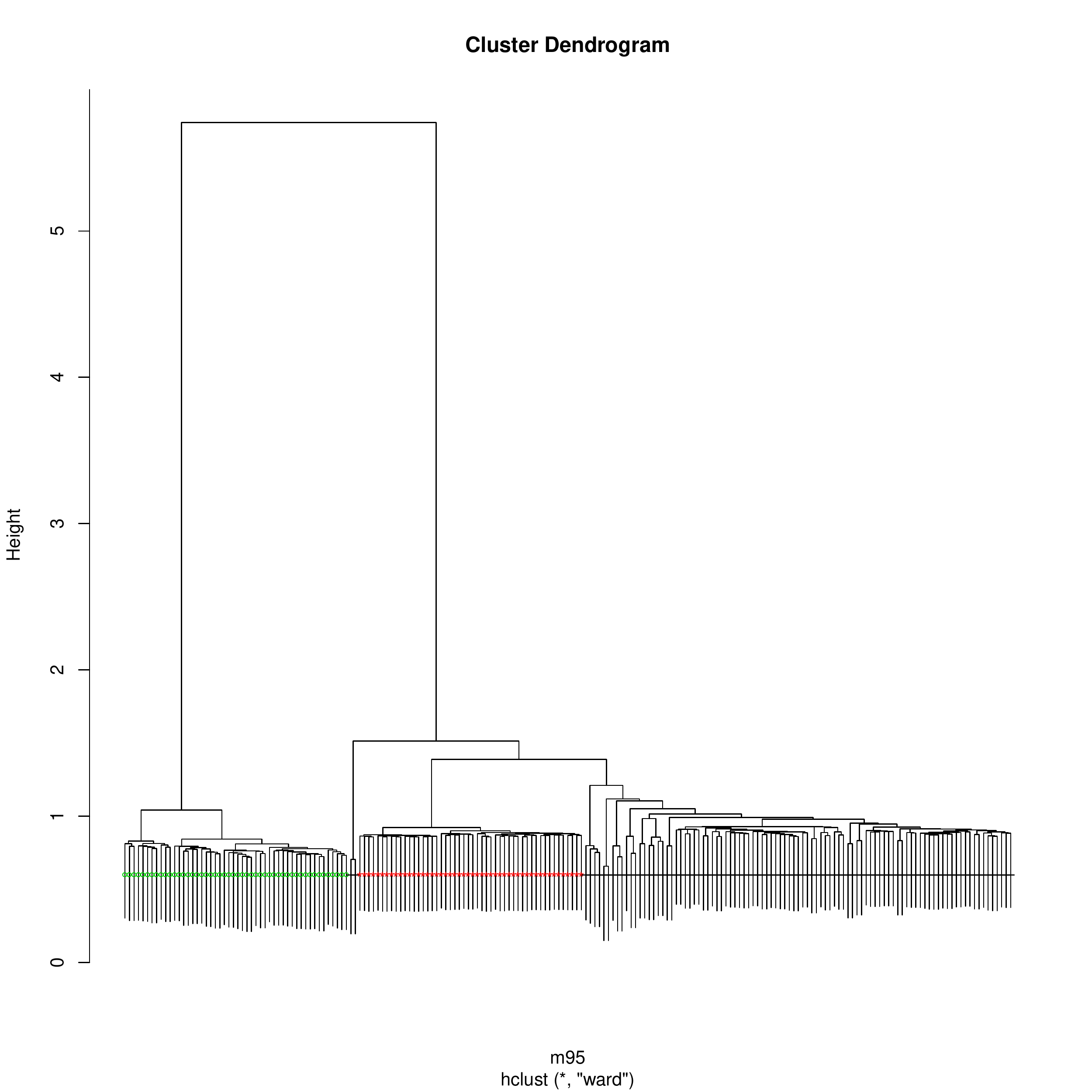}
  \caption{%
    \label{fi:dendro-2-ward.95}
    Dendrogram for Experiment 2, using Ward clustering, for discounting
    factor $.95$.  Black dots mark \texttt{arxiv} papers, green marks
    SCIgen papers, and \texttt{automogensen} papers are marked red.}
\end{figure}

The dendrograms using Ward clustering, however, do show a clear
distinction between the three types of papers.  We can only show one of
them here, for $\lambda= .95$ in Fig.~\ref{fi:dendro-2-ward.95}; the
rest are available in appendix.  One clearly sees the SCIgen cluster
(top) separated from all other papers, and then the
\texttt{automogensen} cluster (middle) separated from the \texttt{arxiv}
cluster.  

There is, though, one anomaly: two \texttt{arxiv} papers have been
``wrongly'' grouped into their own cluster (between the SCIgen and the
\texttt{automogensen} clusters).  Looking at these papers, we noticed
that here our pdf-to-text conversion had gone wrong: the papers' text
was all garbled, consisting only of ``AOUOO OO AOO EU OO OU AO'' etc.
The dendrograms rightly identify these two papers in their own cluster;
in the dendrograms using average clustering, this garbled cluster
consistently has distance $1$ to the other clusters.

We also notice in the dendrogram with average clustering and discounting
factor $.95$ (Fig.~\ref{fi:dendro-2-avg.95} in the appendix) that some
of the \texttt{arxiv} papers with small mutual distances have the same
authors and are within the same subject.  This applies
to~\cite{DBLP:journals/corr/SmithKSBP13v1}
vs.~\cite{DBLP:journals/corr/abs-1303-5613v1} and
to~\cite{DBLP:journals/corr/abs-1202-1307v2}
vs.~\cite{DBLP:journals/corr/abs-1107-0062v1}.  These similarities
appear much more clearly in the $\lambda= .95$ dendrogram than in the
ones with lower discounting factor.

As a conclusion from this experiment, we can say that whereas average
clustering had some difficulties in distinguishing between fake and
\texttt{arxiv} papers, Ward clustering did not have any problems.  The
only effect of the discounting factor we could see was in identifying
similar \texttt{arxiv} papers.
We believe that one reason for the inconclusiveness of the dendrograms
with average clustering is the huge variety of the \texttt{arxiv}
corpus.  Whereas the genuine corpus of the first experiment included
only papers from the verification sub-field of theoretical computer
science, the \texttt{arxiv} corpus is comprised of papers from a diverse
selection of research areas within computer science, including robotics,
network detection, computational geometry, constraint programming,
numerical simulation and many others.  Hence, the intra-corpus variation
in the \texttt{arxiv} corpus hides the inter-corpus variations.

\section{Conclusion and Further Work}

We believe we have collected enough evidence that our global
inter-textual distance provides an interesting alternative, or
supplement, to the standard 1-gram distance.  In our experiments, we
have seen that measuring inter-textual distance with high discounting
factor enables us to better differentiate between similar and dissimilar
texts.  More experiments will be needed to identify areas where our
global matching provides advantages over pure 1-gram matching.

With regard to identifying fake scientific papers, we remark that,
according to~\cite{DBLP:journals/scientometrics/LabbeL13}, ``[u]sing
[the 1-gram distance] to detect SCIgen papers relies on the fact that
[...] the SCIgen vocabulary remains quite poor''.  Springer has recently
announced~\cite{news/springer/SCIgen2update} that they will integrate
``[a]n automatic SCIgen detection system [...] in [their] submission
check system'', but they also notice that the ``intention [of fake
papers' authors] seems to have been to increase their publication
numbers and [...] their standing in their respective disciplines and at
their institutions''; of course, auto-detecting SCIgen papers does not
change these motivations.  It is thus reasonable to expect that
generators of fake papers will get better, so that also better tools
will be needed to detect them.  We propose that our phrase-based
distance may be such a tool.

There is room for much improvement in our distance definition.  For
once, we perform no tagging of words which could identify different
spellings or inflections of the same word.  This could easily be
achieved by, using for example the Wordnet
database\footnote{\url{http://wordnet.princeton.edu/}}, replacing our
binary distance between words in Eq.~\eqref{eg:delta} with a
quantitative measure of \emph{word similarity}.  For the second, we take
no consideration of \emph{omitted} words in a phrase; our position match
calculation in Eq.~\eqref{eq:phrasedist} cannot see when two phrases
become one-off like in ``the quick brown fox jumps...''  vs.~``the brown
fox jumps...''.

Our inter-textual distance is inspired by our work
in~\cite{journal/tcs/FahrenbergL13a, DBLP:conf/aplas/FahrenbergL13,
  DBLP:conf/fsttcs/FahrenbergLT11} and other papers, where we define
distances between arbitrary \emph{transition systems}.  Now a text is a
very simple transition system, but so is a text with ``one-off jumps''
like the one above.  Similarly, we can incorporate \emph{swapping} of
words into our distance, so that we would be computing a kind of
\emph{discounted Damerau-Levenshtein
  distance}~\cite{DBLP:journals/cacm/Damerau64} (related approaches,
generally without discounting, are used for \emph{sequence alignment} in
bioinformatics~\cite{journals/jmb/NeedlemanW70,
  journals/jmb/SmithW81}).  We have integrated this approach in an
experimental version of our \texttt{textdist} tool.

\bibliographystyle{myabbrv}
\bibliography{mybib}

\begin{thebibliography}{10}

\bibitem{hal/AsarinD09}
E.~Asarin and A.~Degorre.
\newblock Volume and entropy of regular timed languages.
\newblock {\em hal}, 2009.
\newblock \url{http://hal.archives-ouvertes.fr/hal-00369812}.

\bibitem{DBLP:conf/formats/BassetA11}
N.~Basset and E.~Asarin.
\newblock Thin and thick timed regular languages.
\newblock In  \cite{DBLP:conf/formats/2011}.

\bibitem{DBLP:journals/jql/CortelazzoNT13}
M.~A. Cortelazzo, P.~Nadalutti, and A.~Tuzzi.
\newblock Improving {L}abb{\'e}'s intertextual distance: Testing a revised
  version on a large corpus of italian literature.
\newblock {\em {J. Quant. Linguistics}}, 20(2):125--152, 2013.

\bibitem{DBLP:journals/cacm/Damerau64}
F.~Damerau.
\newblock A technique for computer detection and correction of spelling errors.
\newblock {\em Commun. ACM}, 7(3):171--176, 1964.

\bibitem{DBLP:conf/aplas/FahrenbergL13}
U.~Fahrenberg and A.~Legay.
\newblock Generalized quantitative analysis of metric transition systems.
\newblock In {\em APLAS}, vol. 8301 of {\em {Lect. Notes Comput. Sci.}}
  Springer, 2013.

\bibitem{journal/tcs/FahrenbergL13a}
U.~Fahrenberg and A.~Legay.
\newblock The quantitative linear-time--branching-time spectrum.
\newblock {\em Theor. Comput. Sci.}, 2013.
\newblock Online first. \url{http://dx.doi.org/10.1016/j.tcs.2013.07.030}.

\bibitem{DBLP:conf/fsttcs/FahrenbergLT11}
U.~Fahrenberg, A.~Legay, and C.~R. Thrane.
\newblock The quantitative linear-time--branching-time spectrum.
\newblock In {\em FSTTCS}, vol.~13 of {\em LIPIcs}, 2011.

\bibitem{DBLP:conf/formats/2011}
U.~Fahrenberg and S.~Tripakis, eds.
\newblock {\em Formal Modeling and Analysis of Timed Systems - 9th Int. Conf.},
  vol. 6919 of {\em {Lect. Notes Comput. Sci.}} Springer, 2011.

\bibitem{DBLP:conf/formats/Haverkort11}
B.~R. Haverkort.
\newblock Formal modeling and analysis of timed systems: Technology push or
  market pull?
\newblock In  \cite{DBLP:conf/formats/2011}.

\bibitem{books/KaufmanR90}
L.~Kaufman and P.~J. Rousseeuw.
\newblock {\em Finding Groups in Data: An Introduction to Cluster Analysis}.
\newblock Wiley Interscience. Wiley, 1990.

\bibitem{DBLP:conf/formats/KharmehEM11}
S.~A. Kharmeh, K.~Eder, and D.~May.
\newblock A design-for-verification framework for a configurable
  performance-critical communication interface.
\newblock In  \cite{DBLP:conf/formats/2011}.

\bibitem{journals/nrl/Kuhn55}
H.~W. Kuhn.
\newblock The {H}ungarian method for the assignment problem.
\newblock {\em Naval Research Logistics Quarterly}, 2(1-2):83--97, 1955.

\bibitem{journals/issi/labbe10}
C.~Labb{\'e}.
\newblock {I}ke {A}ntkare, one of the great stars in the scientific firmament.
\newblock {\em ISSI Newsletter}, 6(2):48--52, 2010.
\newblock \url{http://hal.archives-ouvertes.fr/hal-00713564}.

\bibitem{DBLP:journals/jql/LabbeL01}
C.~Labb{\'e} and D.~Labb{\'e}.
\newblock Inter-textual distance and authorship attribution. {C}orneille and
  {M}oli{\`e}re.
\newblock {\em J. Quant. Linguistics}, 8(3):213--231, 2001.

\bibitem{DBLP:journals/lalc/LabbeL06}
C.~Labb{\'e} and D.~Labb{\'e}.
\newblock A tool for literary studies: Intertextual distance and tree
  classification.
\newblock {\em Literary Linguistic Comp.}, 21(3):311--326, 2006.

\bibitem{DBLP:journals/scientometrics/LabbeL13}
C.~Labb{\'e} and D.~Labb{\'e}.
\newblock Duplicate and fake publications in the scientific literature: how
  many {SCIgen} papers in computer science?
\newblock {\em Scientometrics}, 94(1):379--396, 2013.

\bibitem{DBLP:journals/jql/Labbe07}
D.~Labb{\'e}.
\newblock Experiments on authorship attribution by intertextual distance in
  {E}nglish.
\newblock {\em {J. Quant. Linguistics}}, 14(1):33--80, 2007.

\bibitem{DBLP:conf/naacl/LinH03}
C.-Y. Lin and E.~H. Hovy.
\newblock Automatic evaluation of summaries using n-gram co-occurrence
  statistics.
\newblock In {\em HLT-NAACL}, 2003.

\bibitem{DBLP:conf/acl/LinO04}
C.-Y. Lin and F.~J. Och.
\newblock Automatic evaluation of machine translation quality using longest
  common subsequence and skip-bigram statistics.
\newblock In {\em ACL}. ACL, 2004.

\bibitem{journals/jmb/NeedlemanW70}
S.~B. Needleman and C.~D. Wunsch.
\newblock A general method applicable to the search for similarities in the
  amino acid sequence of two proteins.
\newblock {\em J. Molecular Bio.}, 48(3):443--453, 1970.

\bibitem{news/nature/Gibberish14}
R.~V. Noorden.
\newblock Publishers withdraw more than 120 gibberish papers.
\newblock Nature News \& Comment, Feb. 2014.
\newblock \url{http://dx.doi.org/10.1038/nature.2014.14763}.

\bibitem{DBLP:conf/acl/PapineniRWZ02}
K.~Papineni, S.~Roukos, T.~Ward, and W.-J. Zhu.
\newblock {BLEU}: a method for automatic evaluation of machine translation.
\newblock In {\em ACL}. ACL, 2002.

\bibitem{DBLP:conf/formats/SankaranarayananHL11}
S.~Sankaranarayanan, H.~Homaei, and C.~Lewis.
\newblock Model-based dependability analysis of programmable drug infusion
  pumps.
\newblock In  \cite{DBLP:conf/formats/2011}.

\bibitem{DBLP:journals/jql/Savoy12}
J.~Savoy.
\newblock Authorship attribution: A comparative study of three text corpora and
  three languages.
\newblock {\em {J. Quant. Linguistics}}, 19(2):132--161, 2012.

\bibitem{DBLP:journals/tois/Savoy12}
J.~Savoy.
\newblock Authorship attribution based on specific vocabulary.
\newblock {\em ACM Trans. Inf. Syst.}, 30(2):12, 2012.

\bibitem{DBLP:journals/corr/SmithKSBP13v1}
S.~T. Smith, E.~K. Kao, K.~D. Senne, G.~Bernstein, and S.~Philips.
\newblock Bayesian discovery of threat networks.
\newblock {\em CoRR}, abs/1311.5552v1, 2013.

\bibitem{DBLP:journals/corr/abs-1303-5613v1}
S.~T. Smith, K.~D. Senne, S.~Philips, E.~K. Kao, and G.~Bernstein.
\newblock Network detection theory and performance.
\newblock {\em CoRR}, abs/1303.5613v1, 2013.

\bibitem{journals/jmb/SmithW81}
T.~Smith and M.~Waterman.
\newblock Identification of common molecular subsequences.
\newblock {\em J. Molecular Bio.}, 147(1):195--197, 1981.

\bibitem{news/springer/SCIgen2update}
Springer second update on {SCIgen}-generated papers in conference proceedings.
\newblock Springer Statement, Apr. 2014.
\newblock
  \url{http://www.springer.com/about+springer/media/statements?SGWID=0-1760813%
-6-1460747-0}.

\bibitem{conf/dhcase/TomasiBCDGV13}
F.~Tomasi, I.~Bartolini, F.~Condello, M.~Degli~Esposti, V.~Garulli, and
  M.~Viale.
\newblock Towards a taxonomy of suspected forgery in authorship attribution
  field. {A} case: {M}ontale's {D}iario {P}ostumo.
\newblock In {\em DH-CASE}. ACM, 2013.

\bibitem{DBLP:journals/corr/abs-1202-1307v2}
A.~Ulusoy, S.~L. Smith, X.~C. Ding, and C.~Belta.
\newblock Robust multi-robot optimal path planning with temporal logic
  constraints.
\newblock {\em CoRR}, abs/1202.1307v2, 2012.

\bibitem{DBLP:journals/corr/abs-1107-0062v1}
A.~Ulusoy, S.~L. Smith, X.~C. Ding, C.~Belta, and D.~Rus.
\newblock Optimal multi-robot path planning with temporal logic constraints.
\newblock {\em CoRR}, abs/1107.0062v1, 2011.

\end{thebibliography}

\newpage
\appendix

\section*{Appendix}

\subsection*{Proof of Theorem~\ref{th:labbe}}

Let $\phi$ be an optimal matching in $\Dphrase$ and let $\dphrase( i,
\phi( i), 0)= 1$ (if such $i$ does not exist, then $d_2( A, B, 0)= 0$
and we are done).  Let $w= a_i$.  Assume that there is $b_j= w$ for
which $\dphrase( \phi^{ -1}( j), j)= 1$, then we can define a new
permutation $\phi'$ by $\phi'( i)= j$ and $\phi'( \phi^{ -1}( j))= \phi(
i)$ (and otherwise, values like $\phi$), and $\phi'$ is a better
matching than $\phi$, a contradiction.

Hence $\dphrase( \phi^{ -1}( j), j)= 0$ for all $j$ such that $b_j= w$.
In other words, $\dphrase( i, \phi( i), 0)= 1$ marks the fact that the
word $w= a_i$ occurs one time more in $A$ than in $B$.  The same holds
for all other indices $i$ for which $w= a_i$ and $\dphrase( i, \phi( i),
0)= 1$, so that
\begin{equation*}
  | F_A( w)- F_B( w)|=|\{ i\mid a_i= w, \dphrase( i, \phi( i), 0)=
  1\}|
\end{equation*}
in this case.

Similarly, if we let $v= b_{ \phi( i)}$, then $\dphrase( i, \phi( i),
0)= 1$ marks the fact that the word $v$ occurs one time more in $B$ than
in $A$.  Collecting these two, we see that
\begin{multline*}
  | F_A( w)- F_B( w)|=|\{ i\mid a_i= w, \dphrase( i, \phi( i), 0)= 1\}|
  \\ +|\{ j\mid b_j= w, \dphrase( \phi^{ -1}( j), j, 0)= 1\}|
\end{multline*}
for all words $w\in A\circ B$.  Thus
\begin{align*}
  \sum_{ w\in A\circ B}| F_A( w)- F_B( w)| &= \sum_{ w\in A\circ B}|\{
  i\mid a_i= w, \dphrase( i, \phi( i), 0)= 1\}| \\
  &\qquad+ \sum_{ w\in A\circ B}|\{
  j\mid b_j= w, \dphrase( \phi^{ -1}( j), j, 0)= 1\}| \\
  &= |\{ i\mid \dphrase( i, \phi( i), 0)= 1\}| \\
  &\qquad+ |\{ j\mid \dphrase( \phi^{ -1}( j), j, 0)= 1\}| \\
  &= \sum_i \dphrase( i, \phi( i), 0)+ \sum_j \dphrase( \phi^{ -1}(
  j), j, 0) \\
  &= 2 \sum_i \dphrase( i, \phi( i), 0)\,,
\end{align*}
so that
\begin{align*}
  \dlabbe( A, B) &= \frac{ \sum_{ w\in A\circ B}| F_A( w)- F_B( w)|}{
    2 N_A} \\
  &= \frac{ \sum_i \dphrase( i, \phi( i), 0)}{ N_A}= d_2( A, B, 0)\,,
\end{align*}
because $\phi$ was assumed optimal.

\begin{figure}
  \hspace*{-2em}
  \includegraphics[width=.6\linewidth, trim=30 70 0 70, clip, angle=-90]{experiment_1/plots/dendrograms/average/dendrogramm_0}
  \hfill
  \includegraphics[width=.6\linewidth, trim=30 70 0 70, clip, angle=-90]{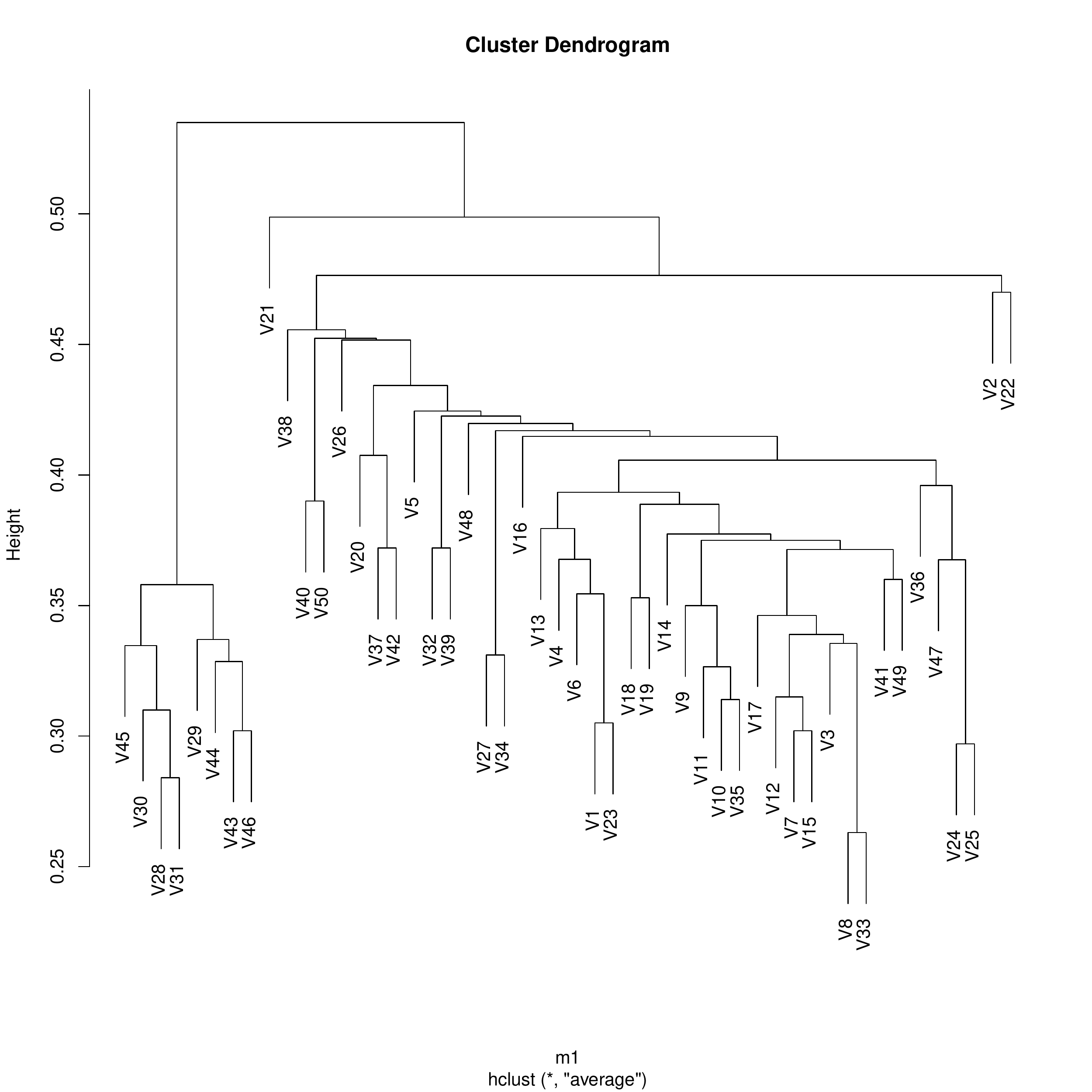}
  \hspace*{-2em}
  
  \bigskip%
  \hspace*{-2em}
  \includegraphics[width=.6\linewidth, trim=30 70 0 70, clip, angle=-90]{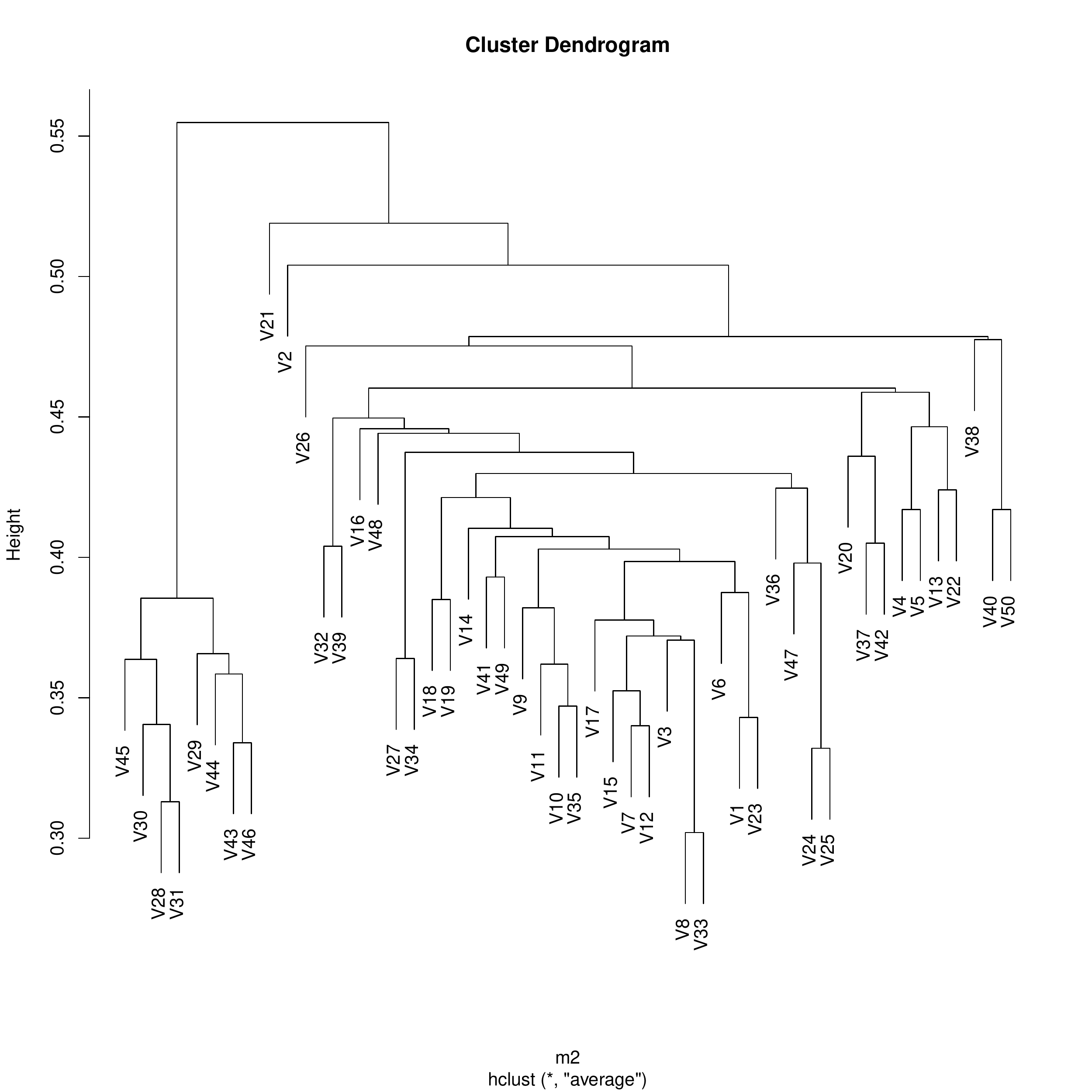}
  \hfill
  \includegraphics[width=.6\linewidth, trim=30 65 0 75, clip, angle=-90]{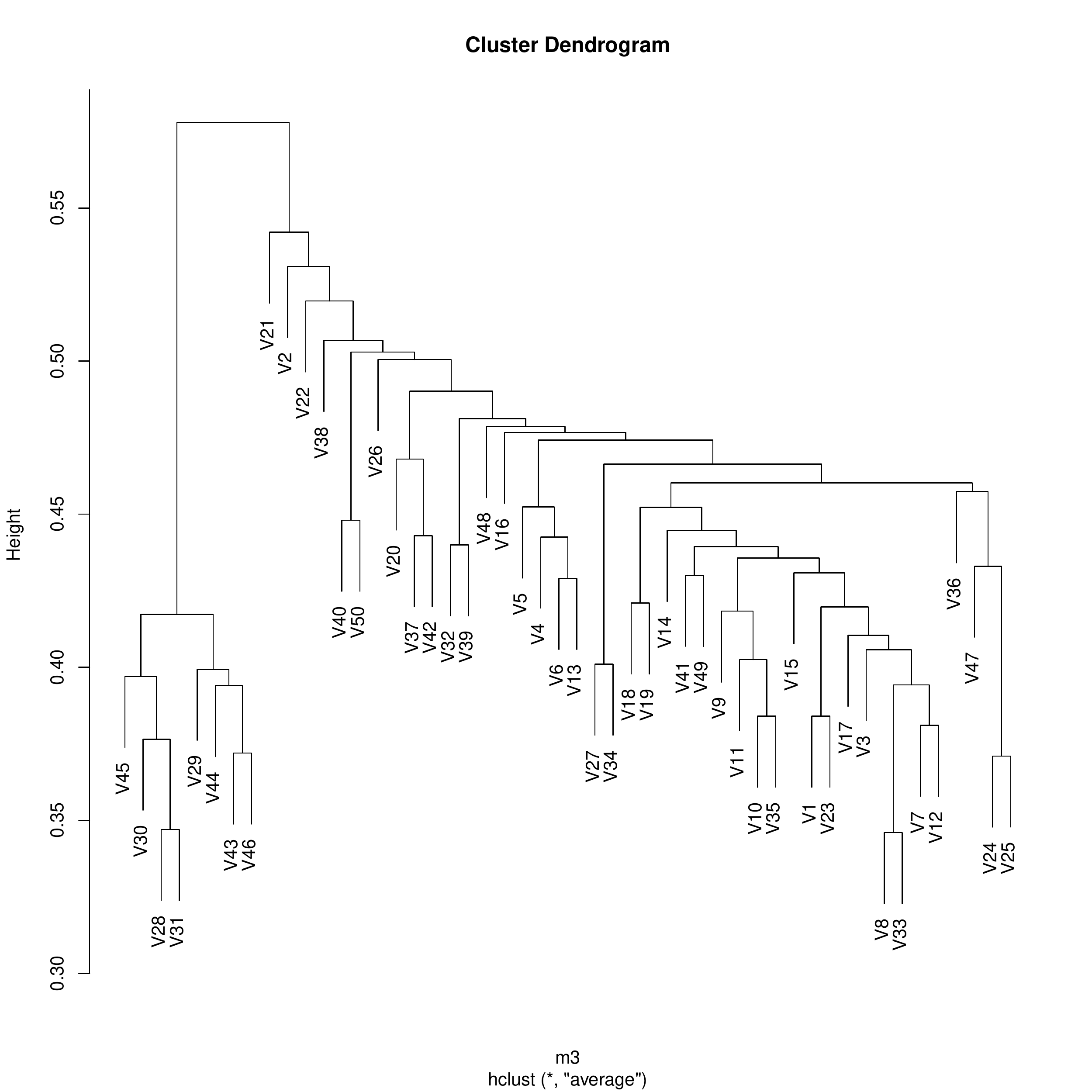}
  \hspace*{-2em}
  \caption{%
    \label{fi:dendro-1-avg.1}
    Dendrograms for Experiment 1, using average clustering, for
    discounting factors $0$ (top left), $.1$ (top right), $.2$ (bottom
    left) and $.3$ (bottom right), respectively.  Fake papers are
    numbered 28-31 (Antkare) and 43-46 (SCIgen), the others are
    genuine.}
\end{figure}

\begin{figure}
  \hspace*{-2em}
  \includegraphics[width=.6\linewidth, trim=30 70 0 70, clip, angle=-90]{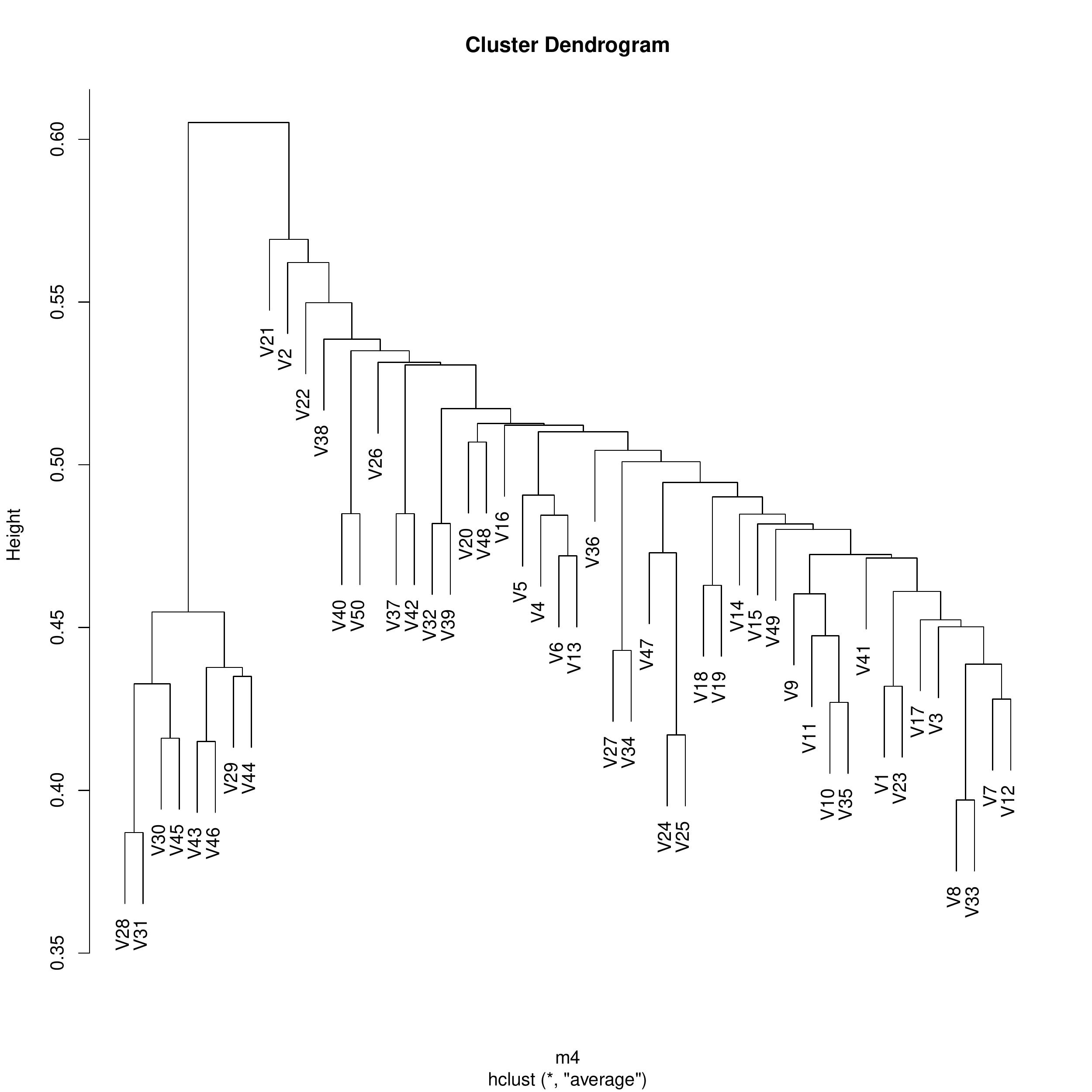}
  \hfill
  \includegraphics[width=.6\linewidth, trim=30 70 0 70, clip, angle=-90]{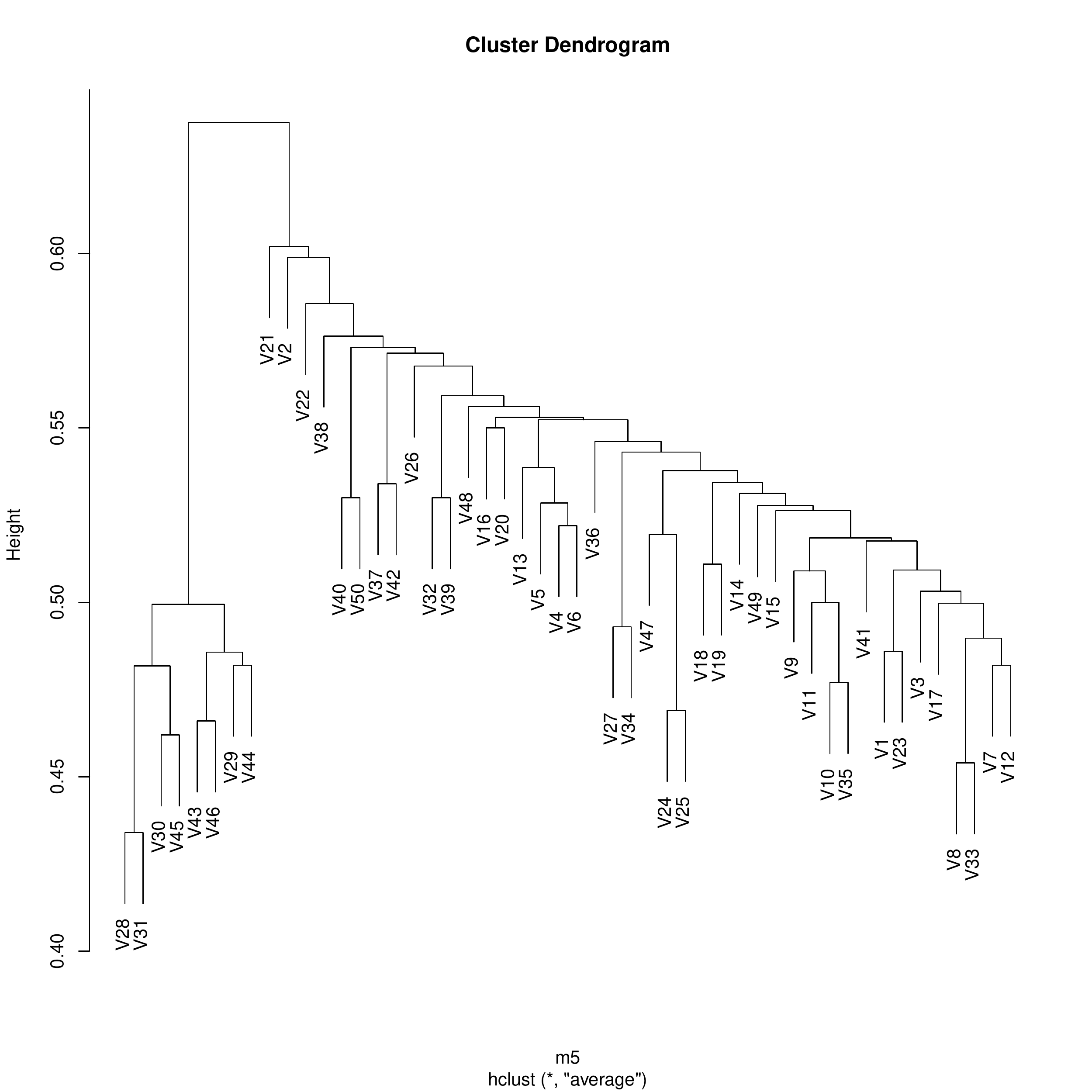}
  \hspace*{-2em}
  
  \bigskip%
  \hspace*{-2em}
  \includegraphics[width=.6\linewidth, trim=30 65 0 75, clip, angle=-90]{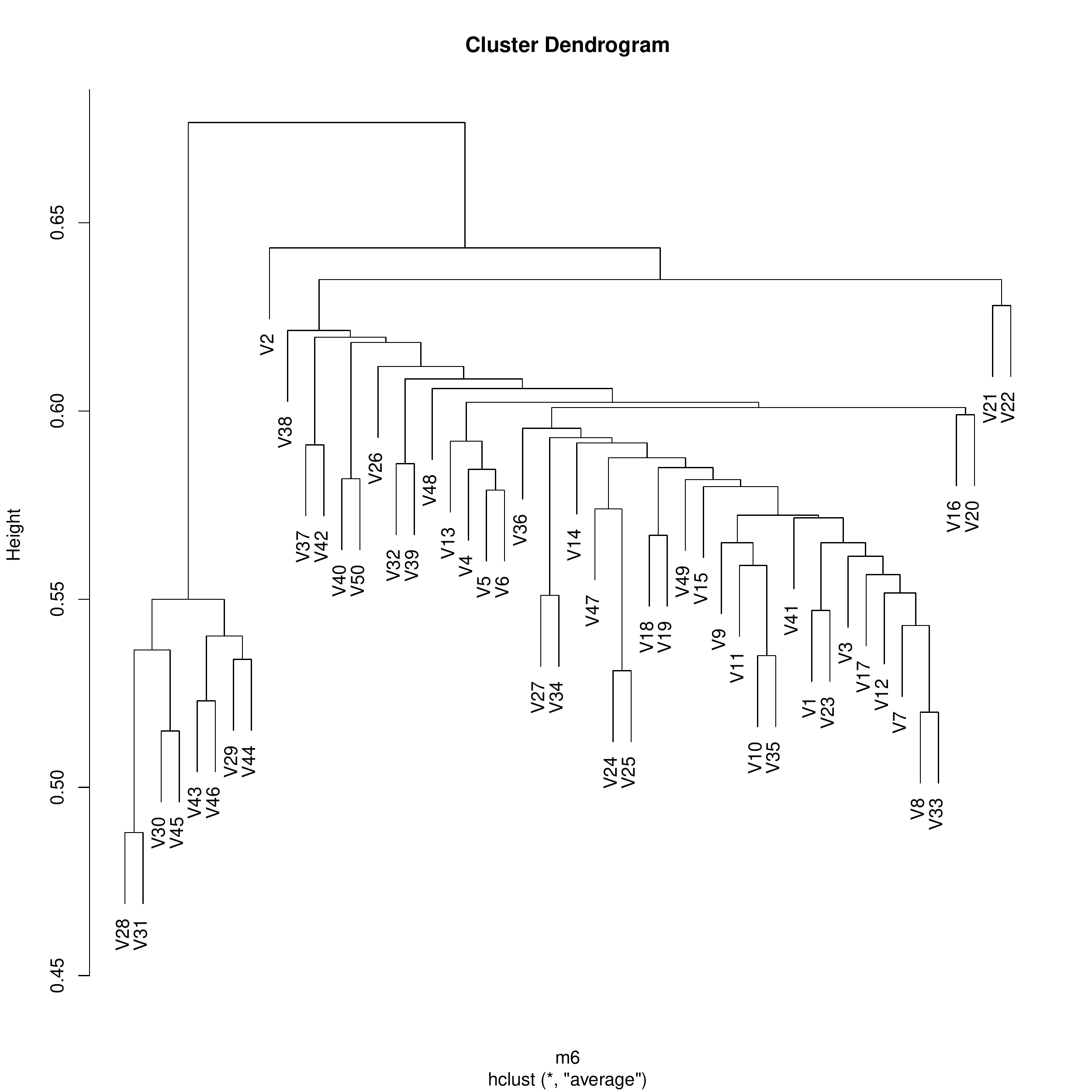}
  \hfill
  \includegraphics[width=.6\linewidth, trim=30 65 0 75, clip, angle=-90]{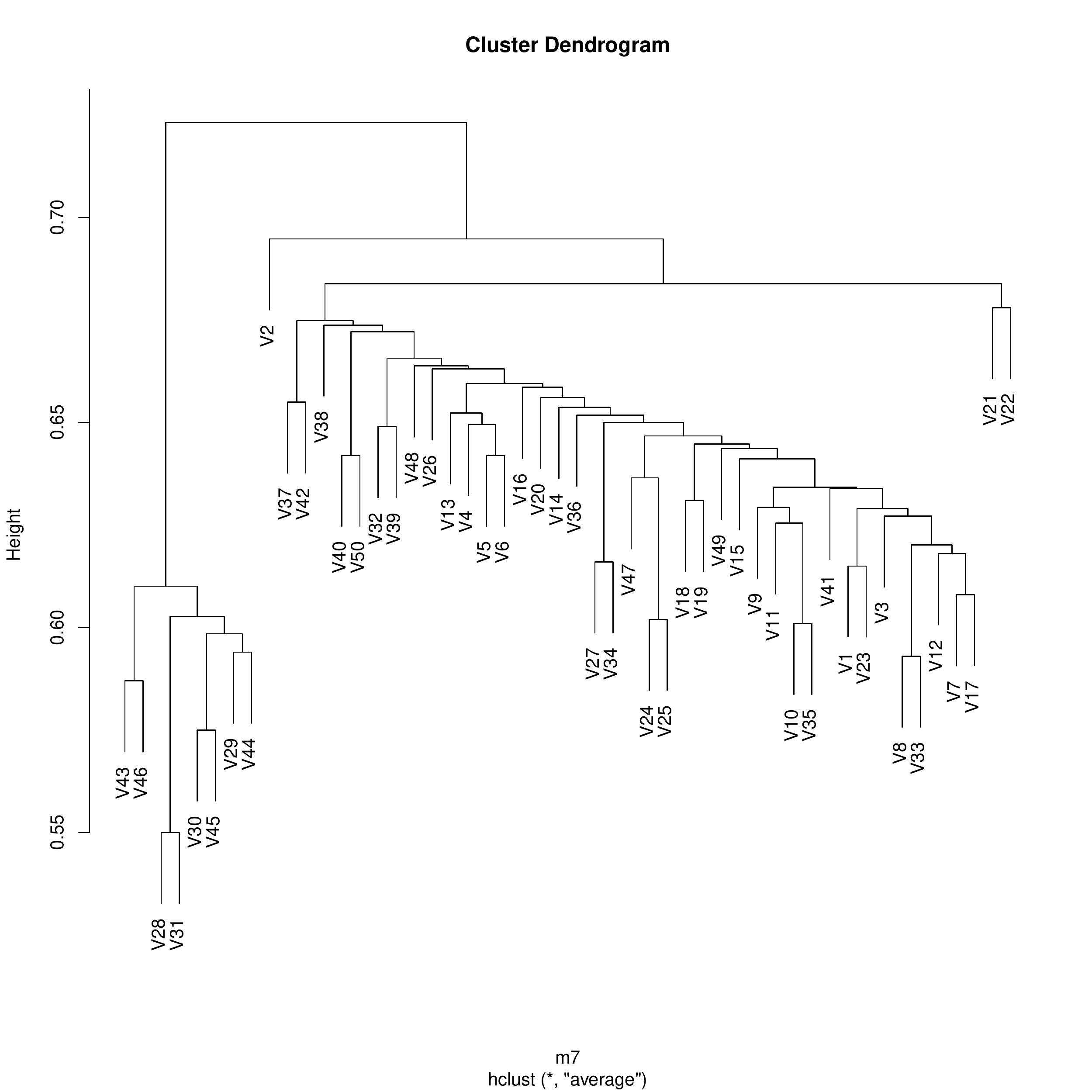}
  \hspace*{-2em}
  \caption{%
    \label{fi:dendro-1-avg.2}
    Dendrograms for Experiment 1, using average clustering, for
    discounting factors $.4$ (top left), $.5$ (top right), $.6$ (bottom
    left) and $.7$ (bottom right), respectively.  Fake papers are
    numbered 28-31 (Antkare) and 43-46 (SCIgen), the others are
    genuine.}
\end{figure}

\begin{figure}
  \hspace*{-2em}
  \includegraphics[width=.6\linewidth, trim=30 70 0 70, clip, angle=-90]{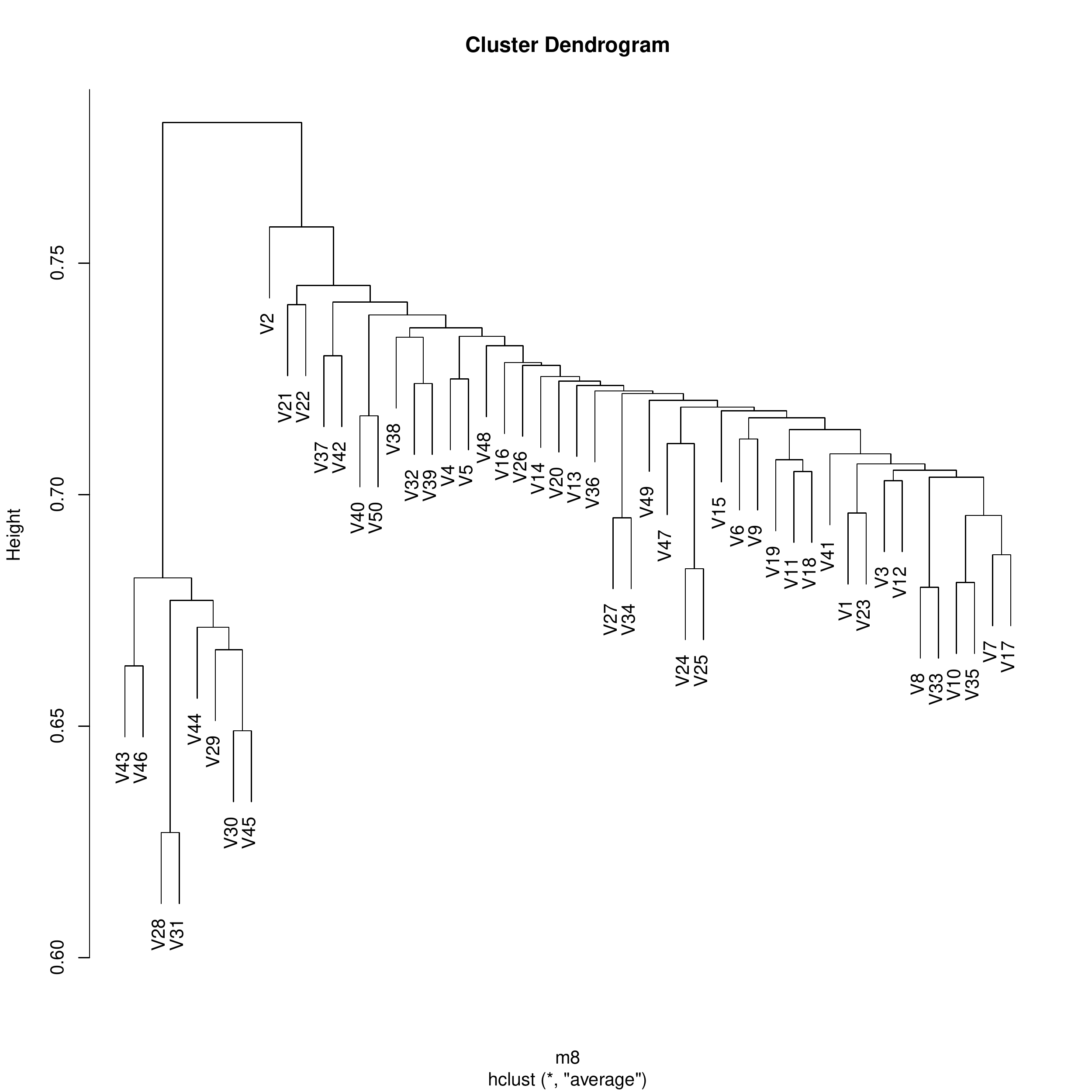}
  \hfill
  \includegraphics[width=.6\linewidth, trim=30 60 0 80, clip, angle=-90]{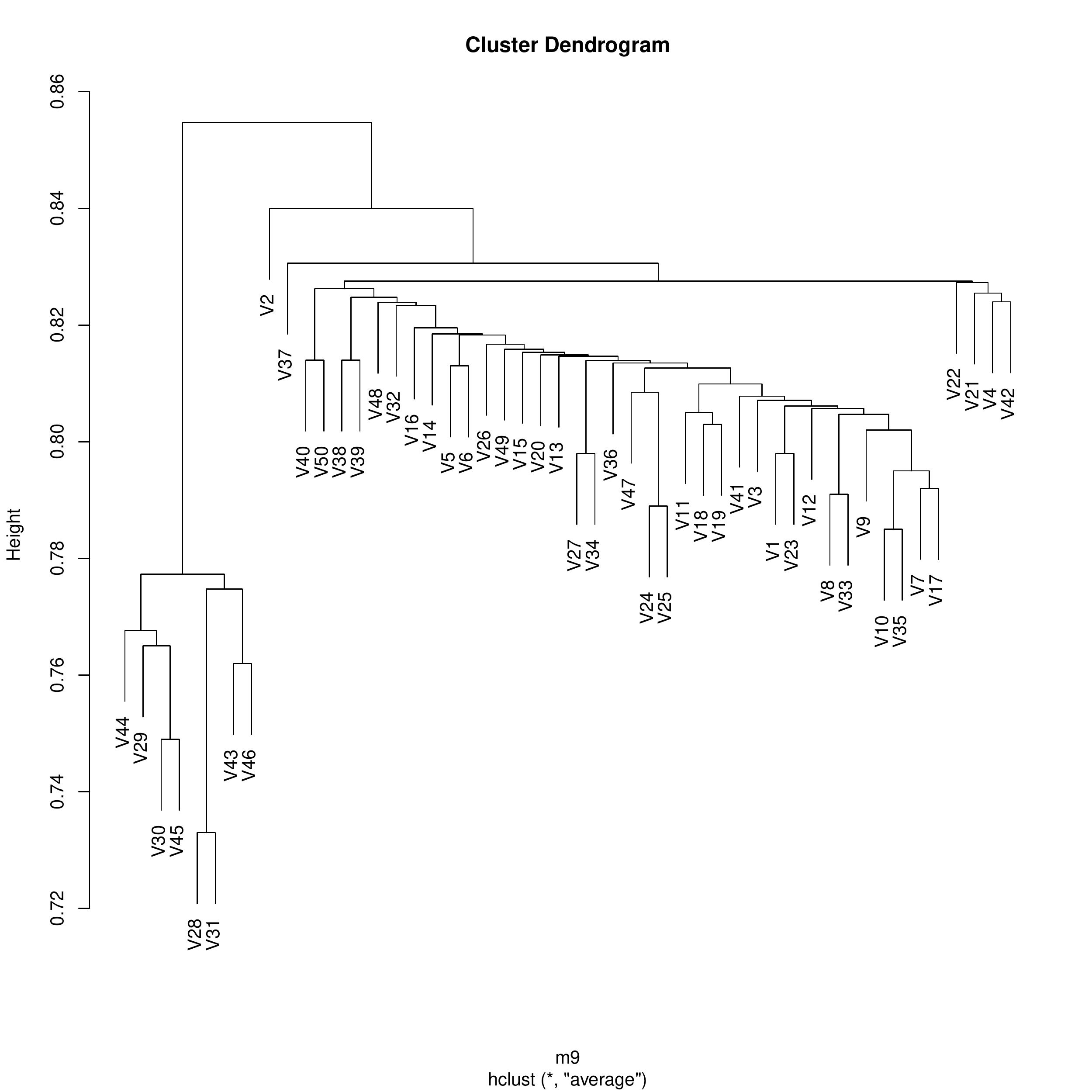}
  \hspace*{-2em}
  
  \bigskip%
  \hspace*{-2em}
  \includegraphics[width=.6\linewidth, trim=30 70 0 70, clip, angle=-90]{experiment_1/plots/dendrograms/average/dendrogramm_95}
  \caption{%
    \label{fi:dendro-1-avg.3}
    Dendrograms for Experiment 1, using average clustering, for
    discounting factors $.8$ (top left), $.9$ (top right) and $.95$
    (bottom), respectively.  Fake papers are numbered 28-31 (Antkare)
    and 43-46 (SCIgen), the others are genuine.}
\end{figure}

\begin{figure}
  \hspace*{-2em}
  \includegraphics[width=.6\linewidth, trim=30 70 0 50, clip, angle=-90]{experiment_1/plots/dendrograms/ward/ward_0}
  \hfill
  \includegraphics[width=.6\linewidth, trim=30 70 0 70, clip, angle=-90]{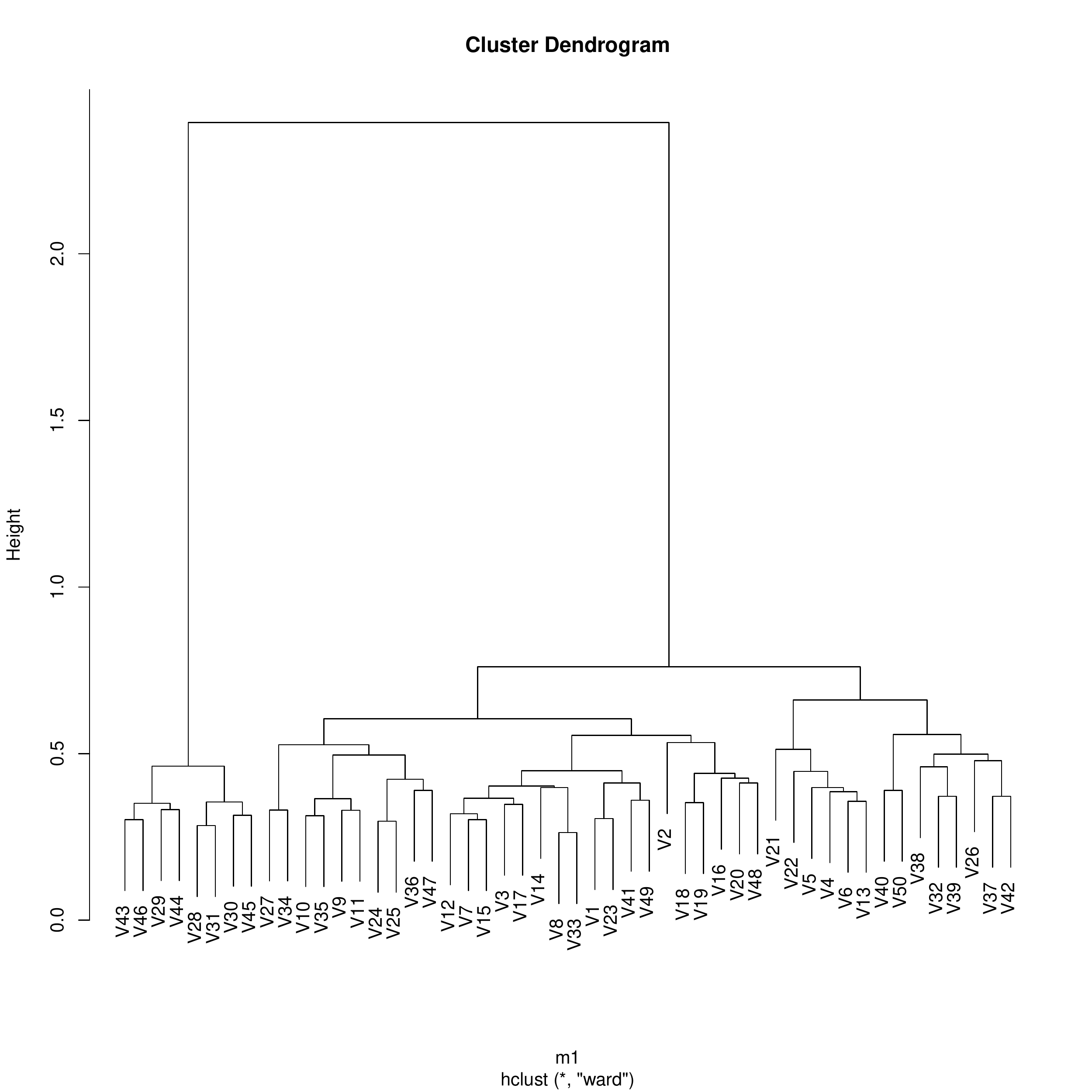}
  \hspace*{-2em}

  \medskip%
  \hspace*{-2em}
  \includegraphics[width=.6\linewidth, trim=30 70 0 50, clip, angle=-90]{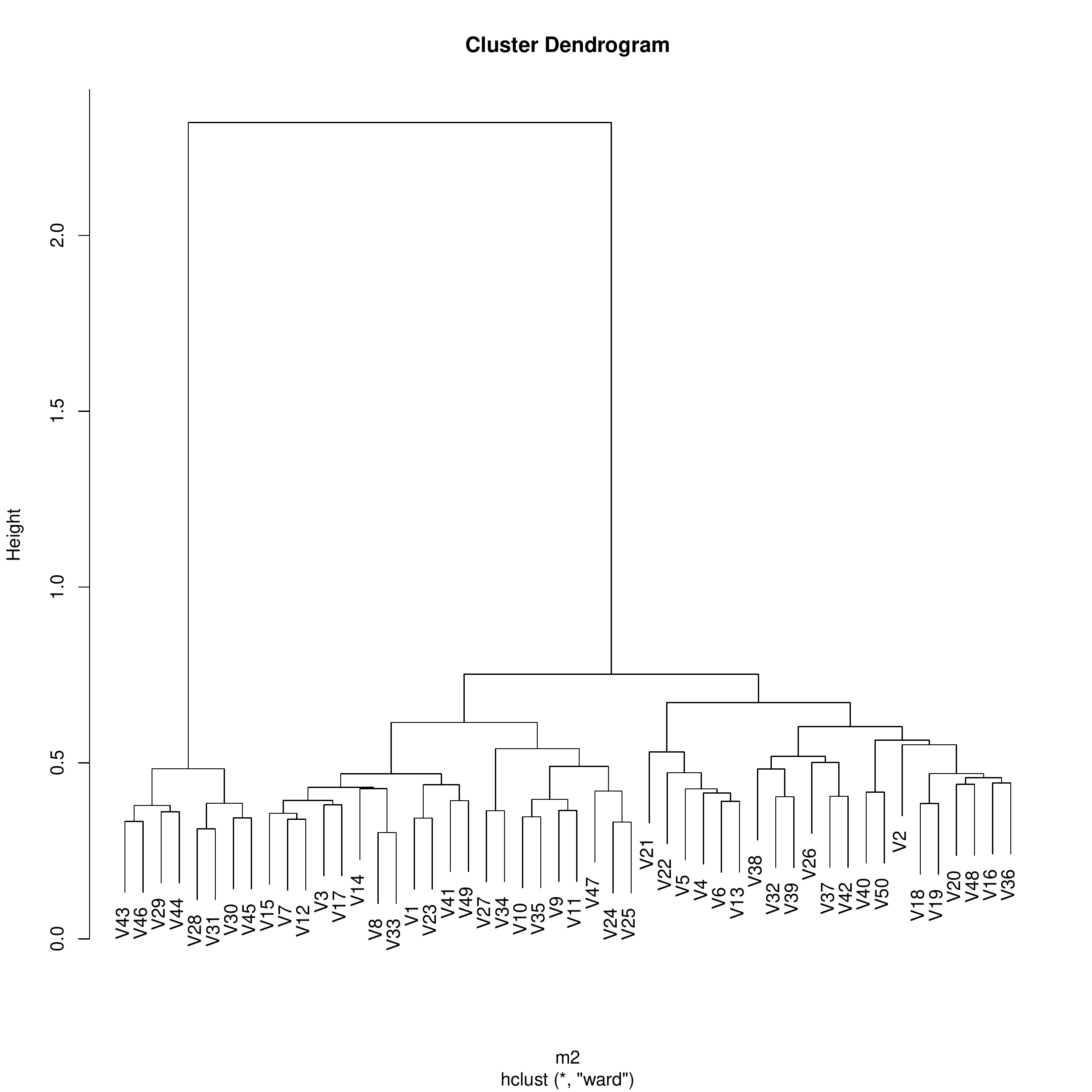}
  \hfill
  \includegraphics[width=.6\linewidth, trim=30 70 0 70, clip, angle=-90]{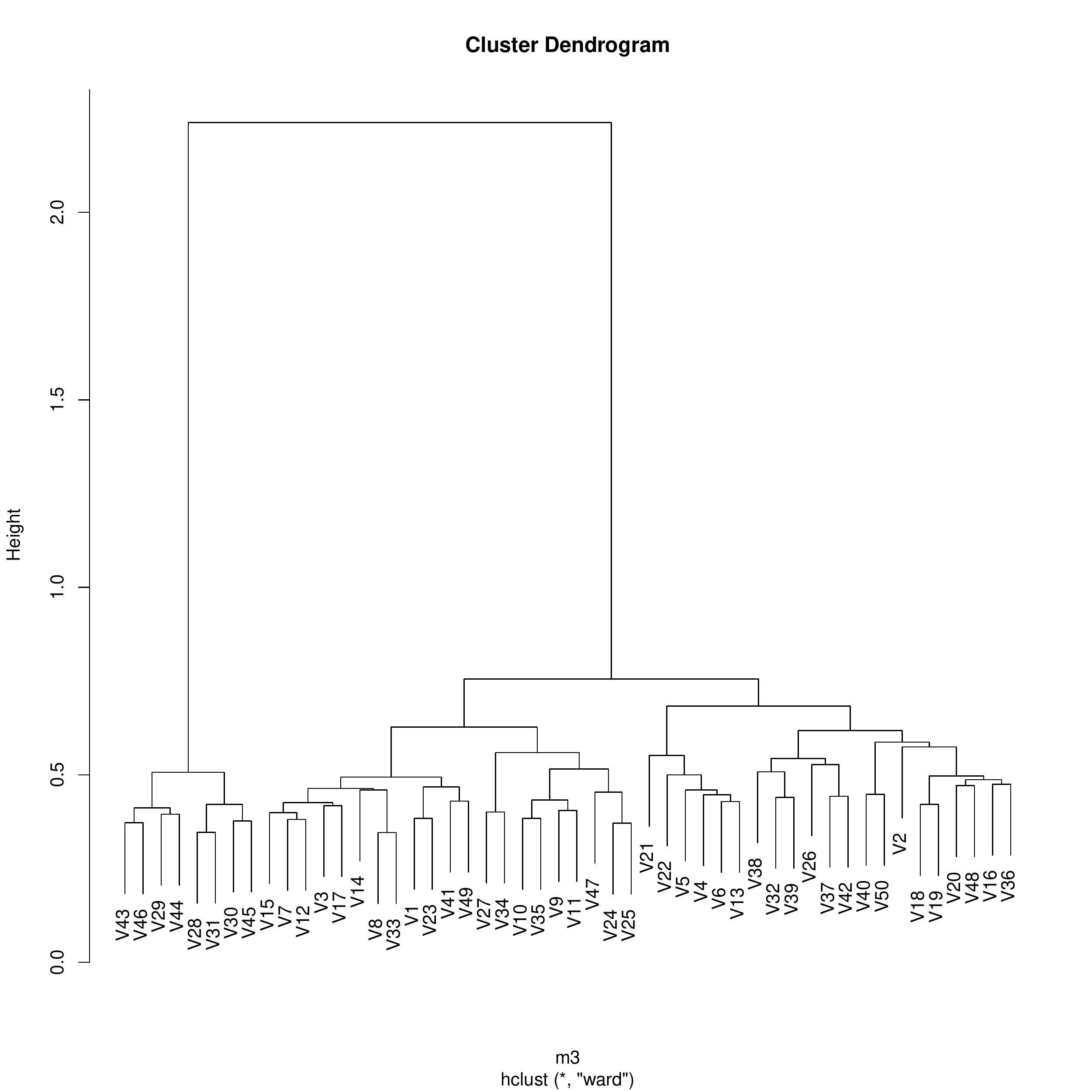}
  \hspace*{-2em}
  \caption{%
    \label{fi:dendro-1-ward.1}
    Dendrograms for Experiment 1, using Ward clustering, for discounting
    factors $0$ (top left), $.1$ (top right), $.2$ (bottom left) and
    $.3$ (bottom right), respectively.  Fake papers are numbered 28-31
    (Antkare) and 43-46 (SCIgen), the others are genuine.}
\end{figure}

\begin{figure}
  \hspace*{-2em}
  \includegraphics[width=.6\linewidth, trim=30 70 0 50, clip, angle=-90]{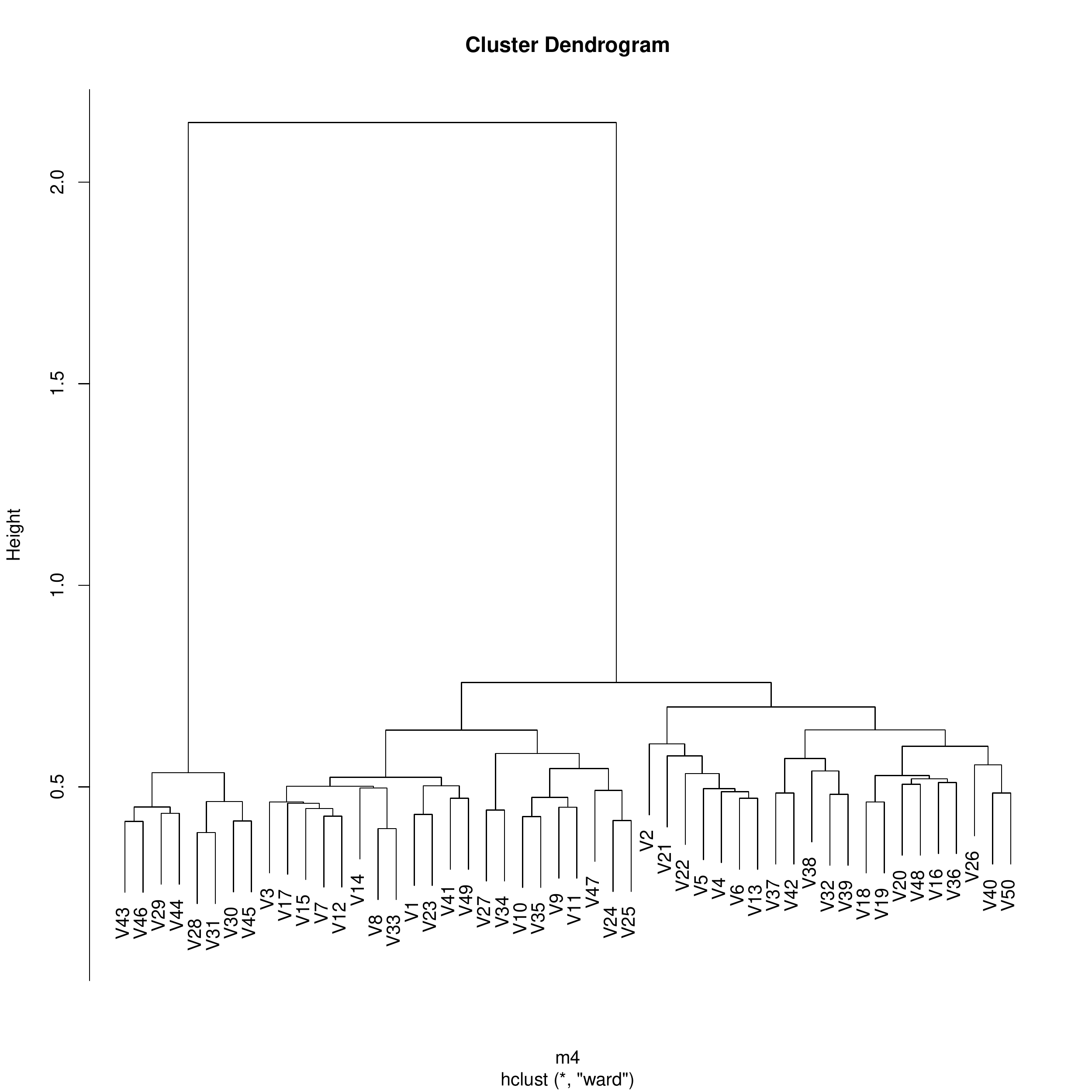}
  \hfill
  \includegraphics[width=.6\linewidth, trim=30 70 0 70, clip, angle=-90]{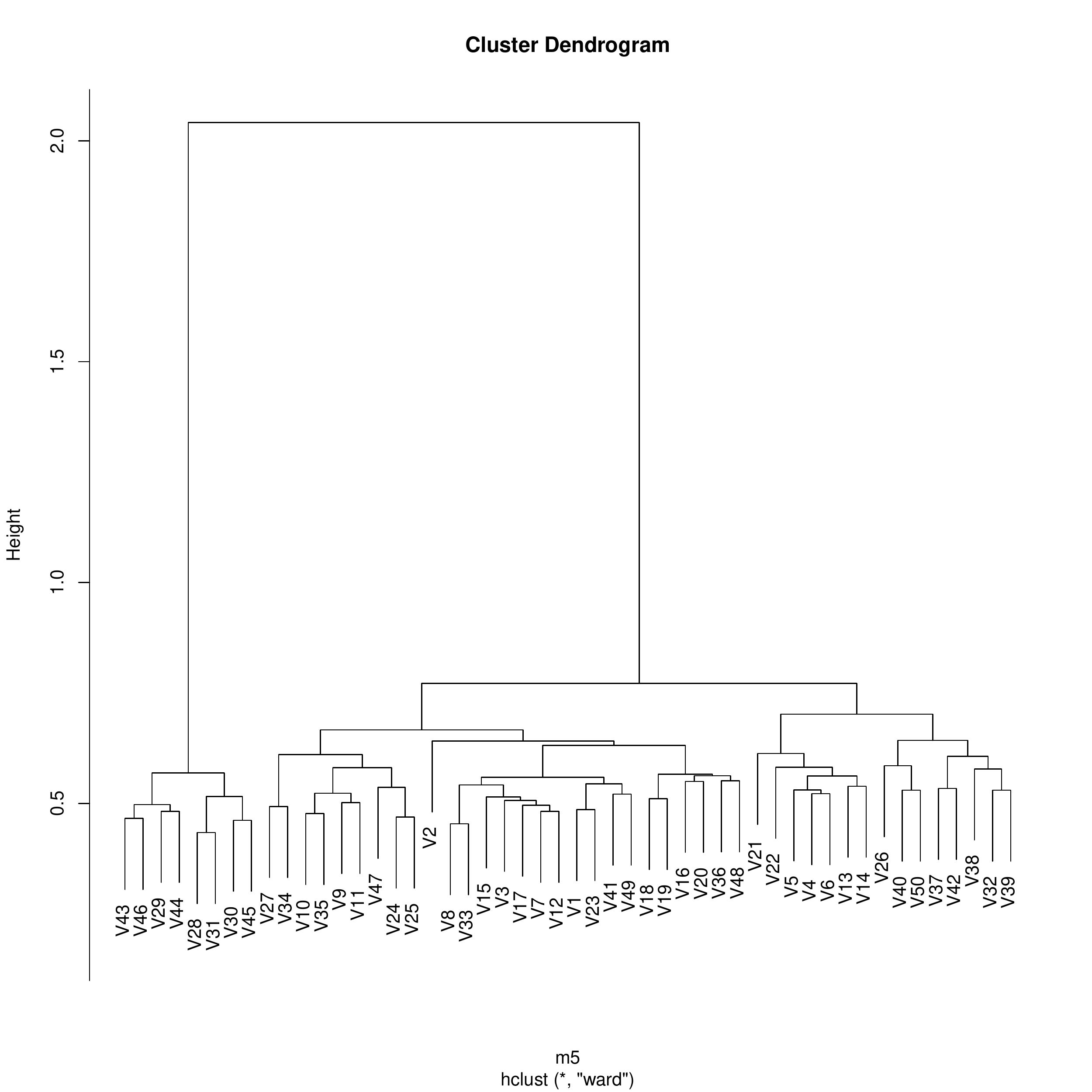}
  \hspace*{-2em}

  \medskip%
  \hspace*{-2em}
  \includegraphics[width=.6\linewidth, trim=30 70 0 50, clip, angle=-90]{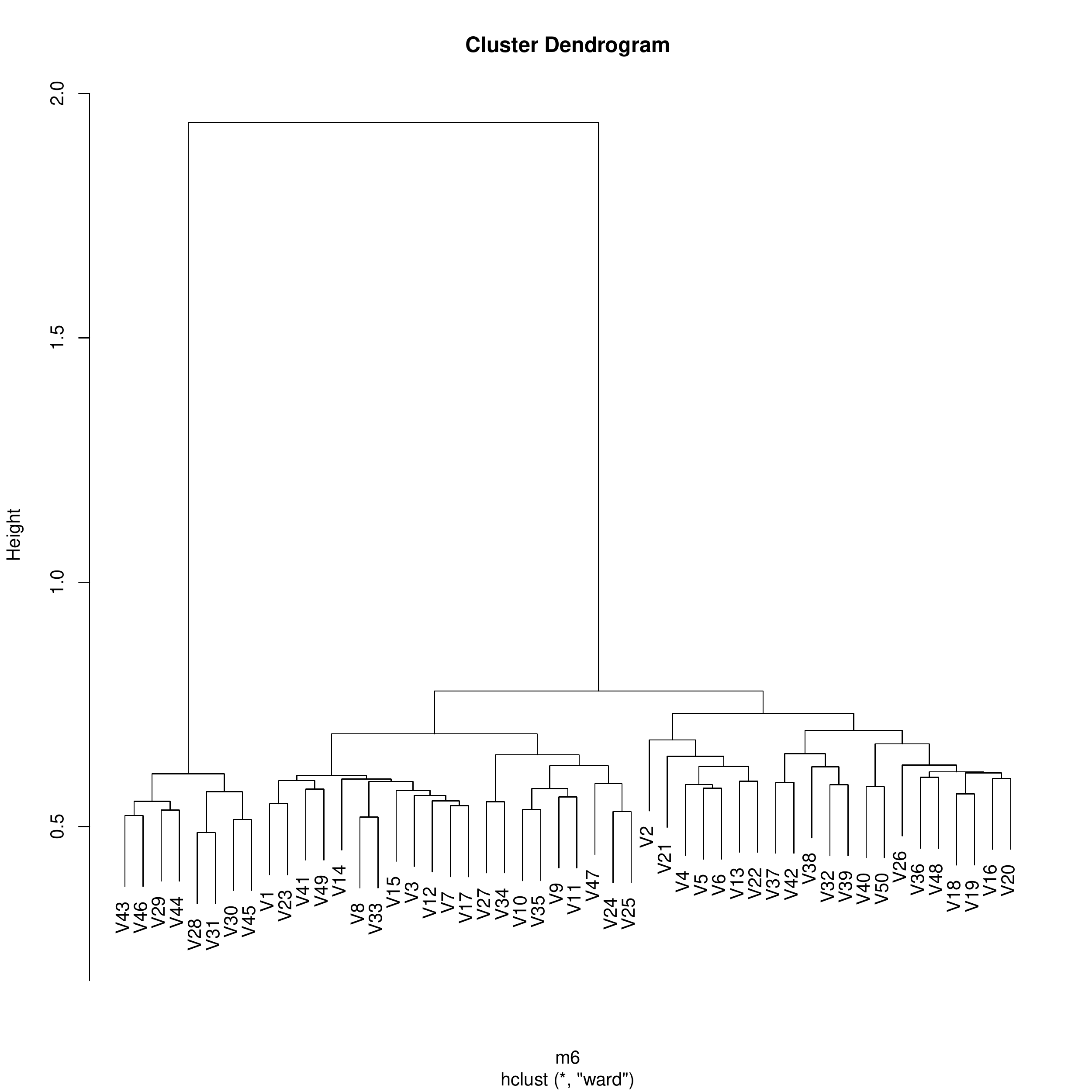}
  \hfill
  \includegraphics[width=.6\linewidth, trim=30 70 0 70, clip, angle=-90]{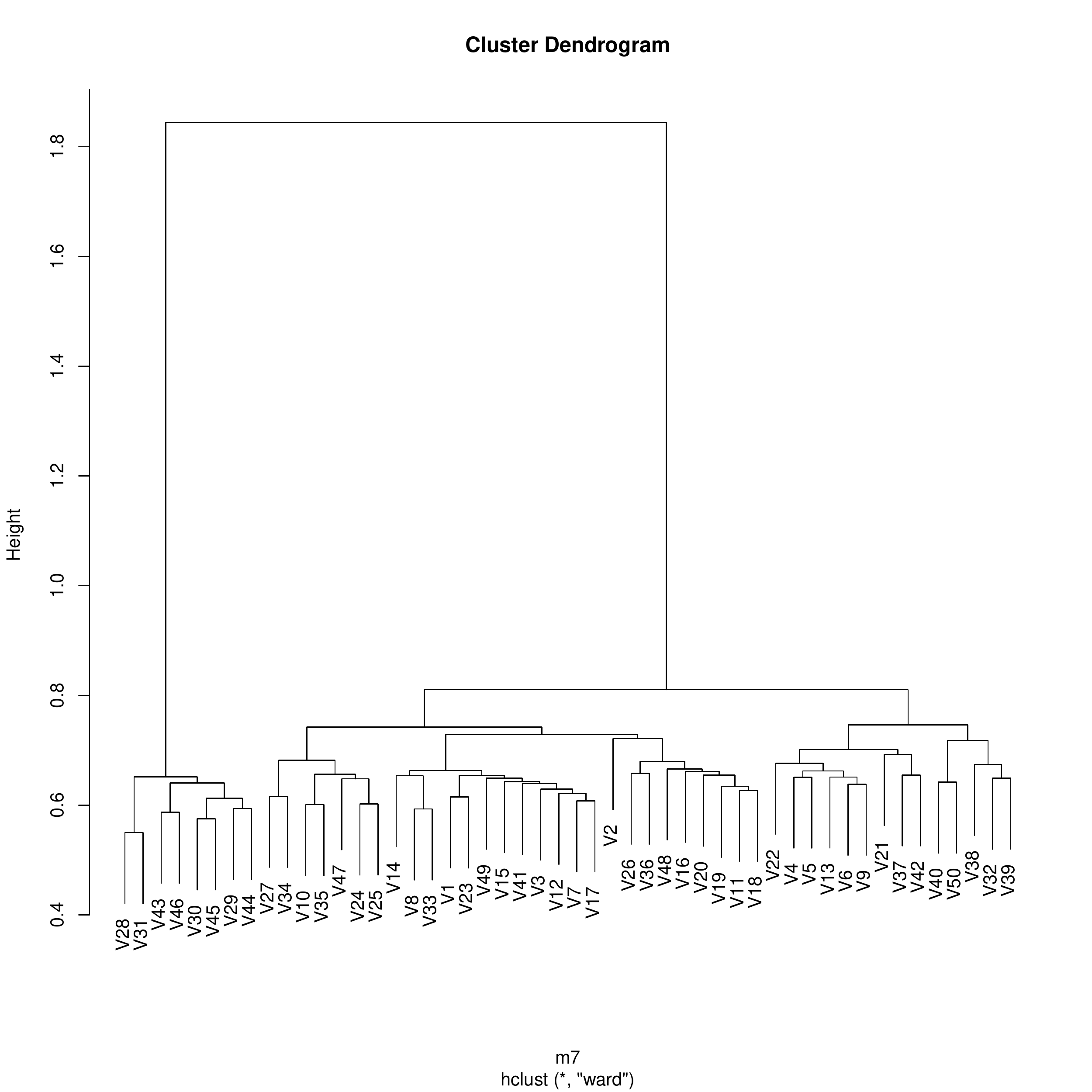}
  \hspace*{-2em}
  \caption{%
    \label{fi:dendro-1-ward.2}
    Dendrograms for Experiment 1, using Ward clustering, for discounting
    factors $.4$ (top left), $.5$ (top right), $.6$ (bottom left) and
    $.7$ (bottom right), respectively.  Fake papers are numbered 28-31
    (Antkare) and 43-46 (SCIgen), the others are genuine.}
\end{figure}

\begin{figure}
  \hspace*{-2em}
  \includegraphics[width=.6\linewidth, trim=30 70 0 50, clip, angle=-90]{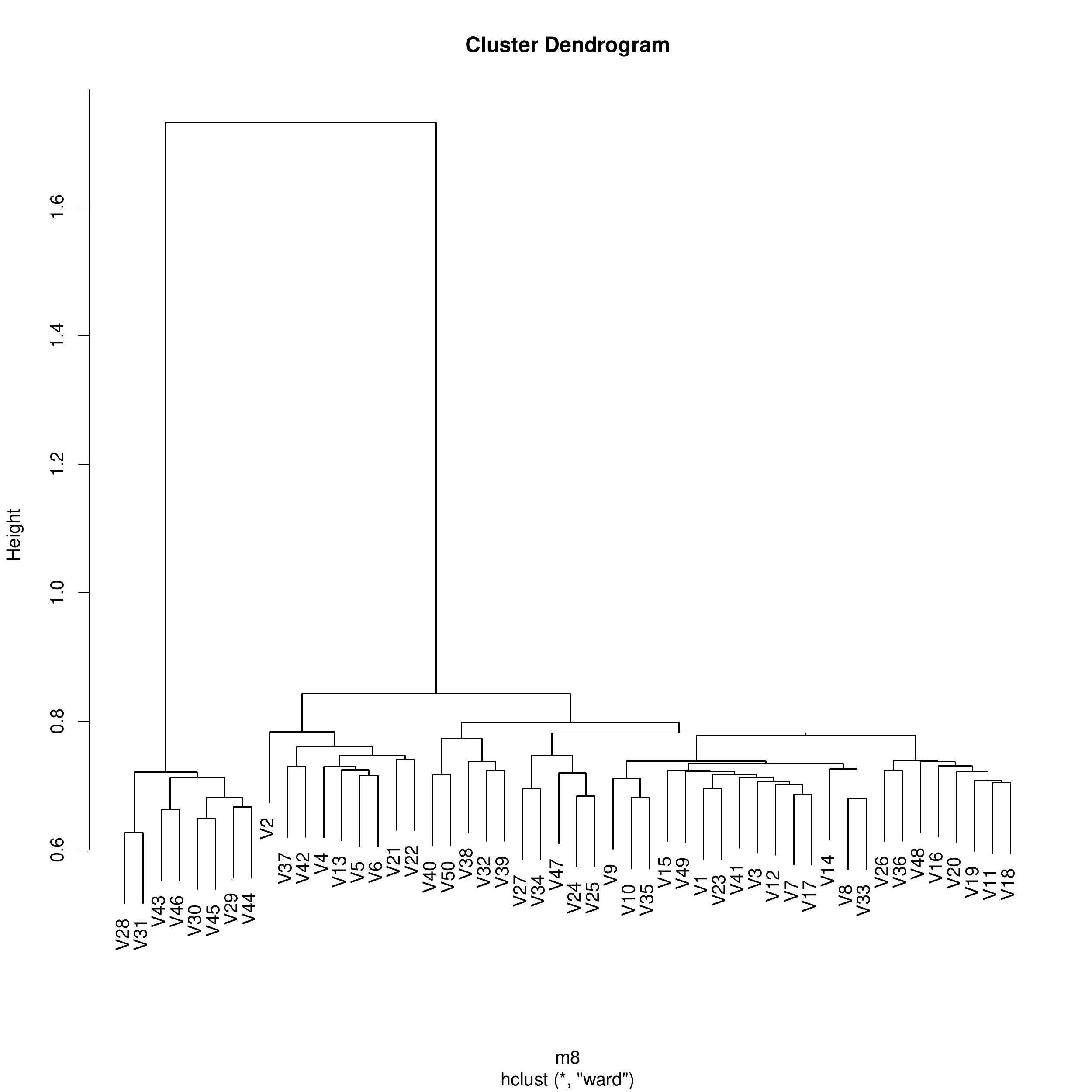}
  \hfill
  \includegraphics[width=.6\linewidth, trim=30 70 0 70, clip, angle=-90]{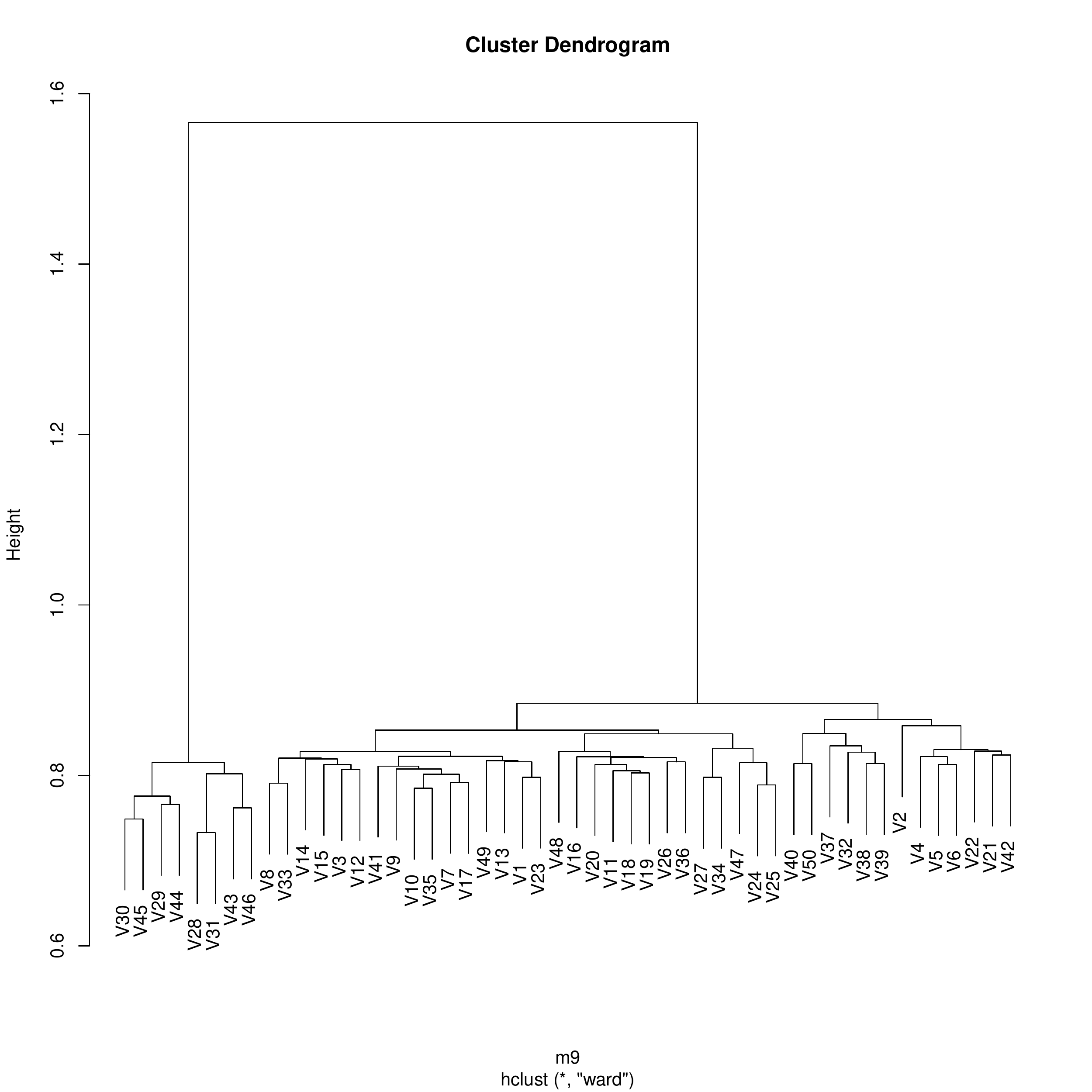}
  \hspace*{-2em}

  \medskip%
  \hspace*{-2em}
  \includegraphics[width=.6\linewidth, trim=30 70 0 50, clip, angle=-90]{experiment_1/plots/dendrograms/ward/ward_95}
  \caption{%
    \label{fi:dendro-1-ward.3}
    Dendrograms for Experiment 1, using Ward clustering, for discounting
    factors $.8$ (top left), $.9$ (top right) and $.95$ (bottom),
    respectively.  Fake papers are numbered 28-31 (Antkare) and 43-46
    (SCIgen), the others are genuine.}
\end{figure}

\begin{figure}[tbp]
  \centering
  \includegraphics[width=.8\linewidth, trim=0 0 0 0, clip]{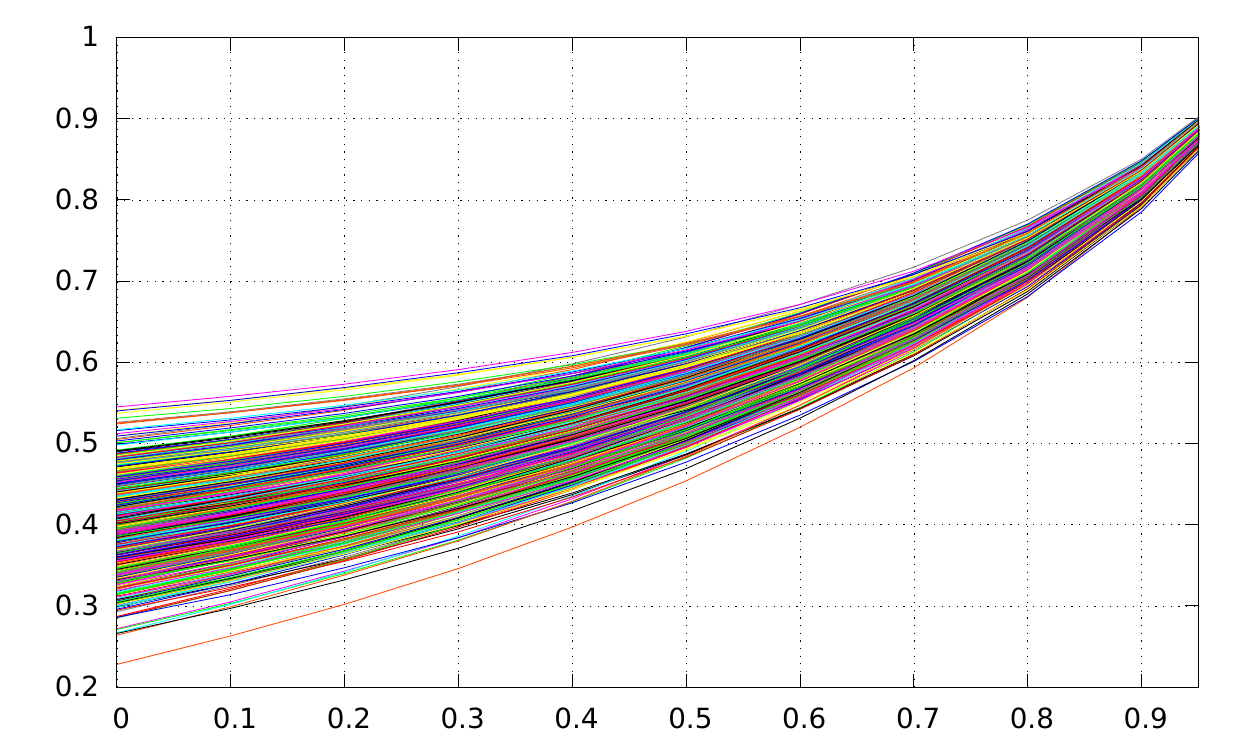}

  \includegraphics[width=.8\linewidth, trim=0 0 0 0, clip]{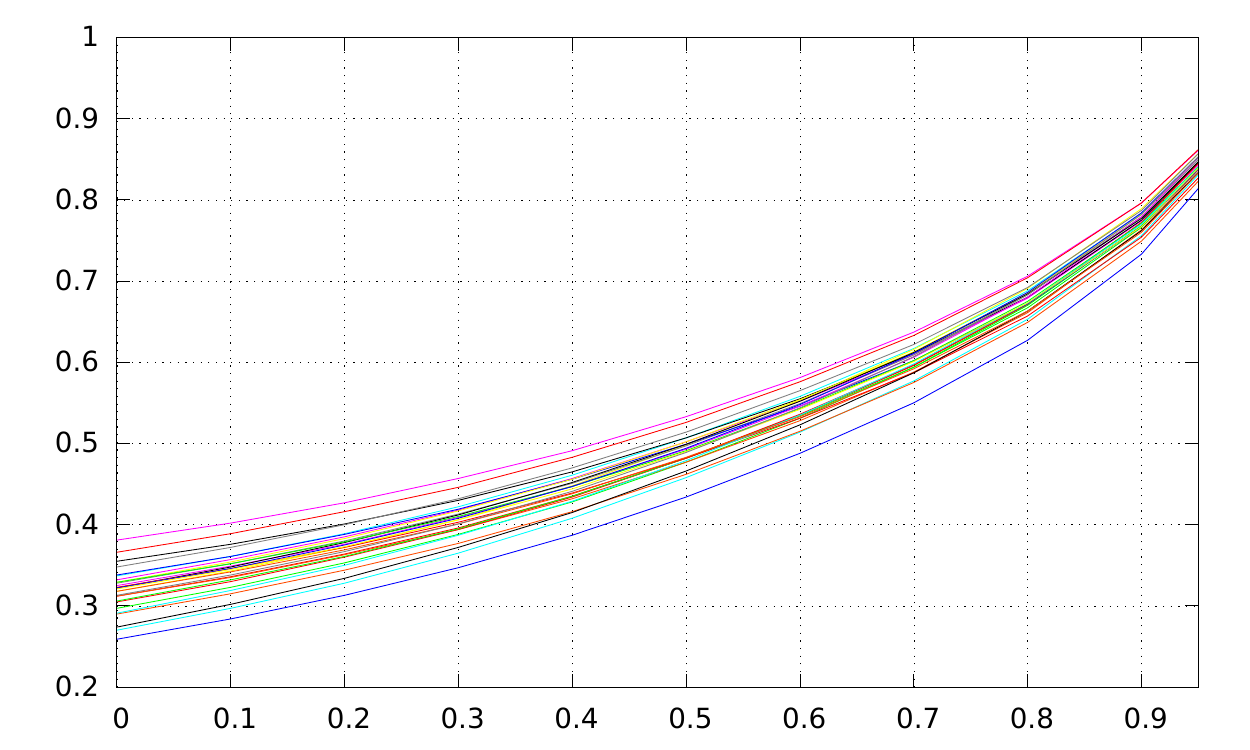}

  \includegraphics[width=.8\linewidth, trim=0 0 0 0, clip]{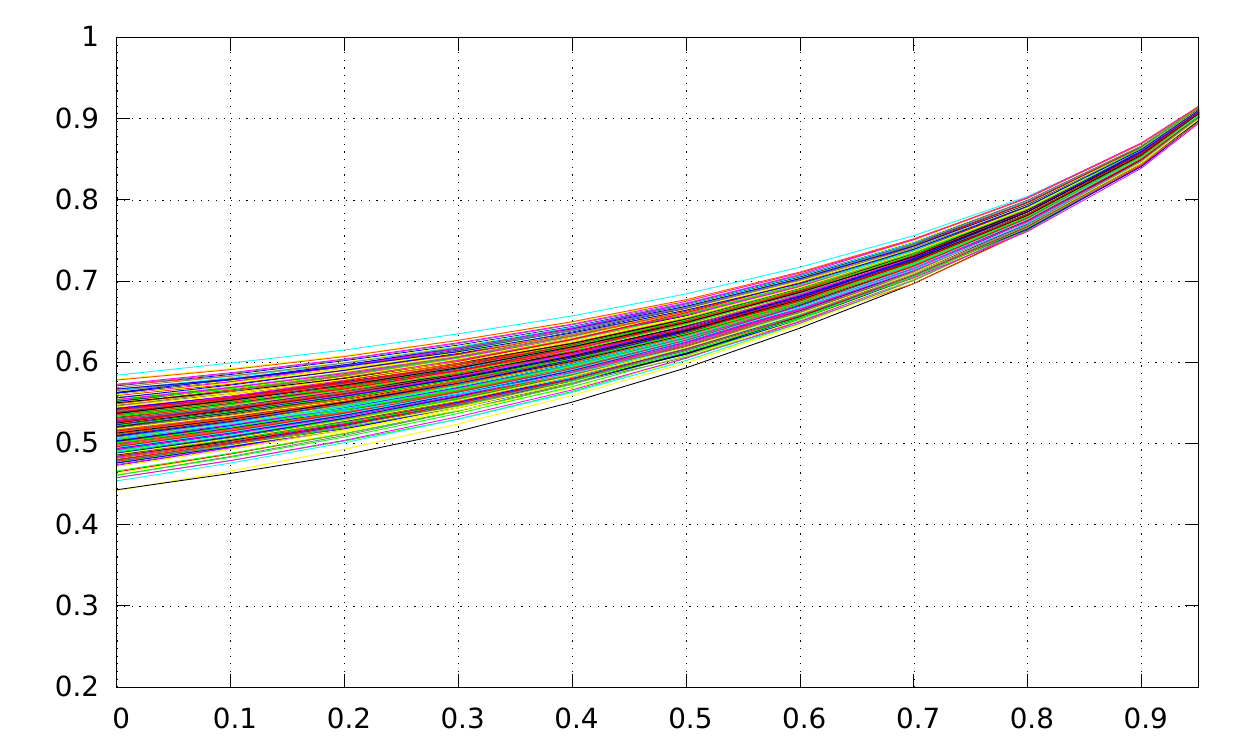}
  \caption{%
    \label{fi:disc-vs-dist-1}
    Distances between all pairs of genuine papers (top), all pairs of
    fake papers (middle), and between genuine and fake papers (bottom)
    in Experiment~1, depending on the discounting factor.  Lines between
    points have been added for visualization.}
\end{figure}

\begin{figure}
  \includegraphics[width=1.2\linewidth, trim=30 70 0 70, clip, angle=-90]{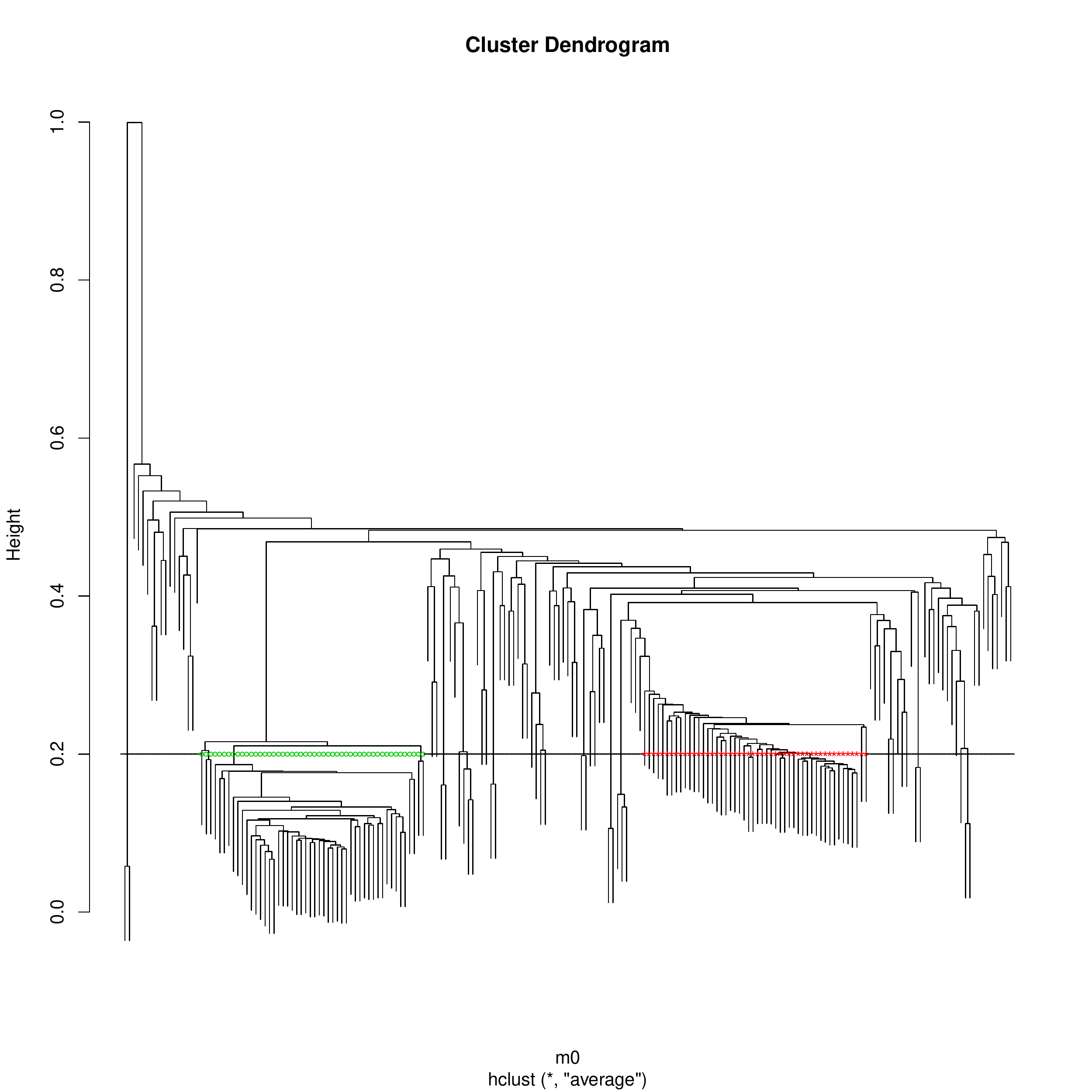}
  \caption{%
    \label{fi:dendro-2-avg.0}
    Dendrogram for Experiment 2, using average clustering, for
    discounting factor $0$.  Black dots mark \texttt{arxiv} papers,
    green marks SCIgen papers, and \texttt{automogensen} papers are
    marked red.}
\end{figure}

\begin{figure}
  \includegraphics[width=1.2\linewidth, trim=30 70 0 70, clip, angle=-90]{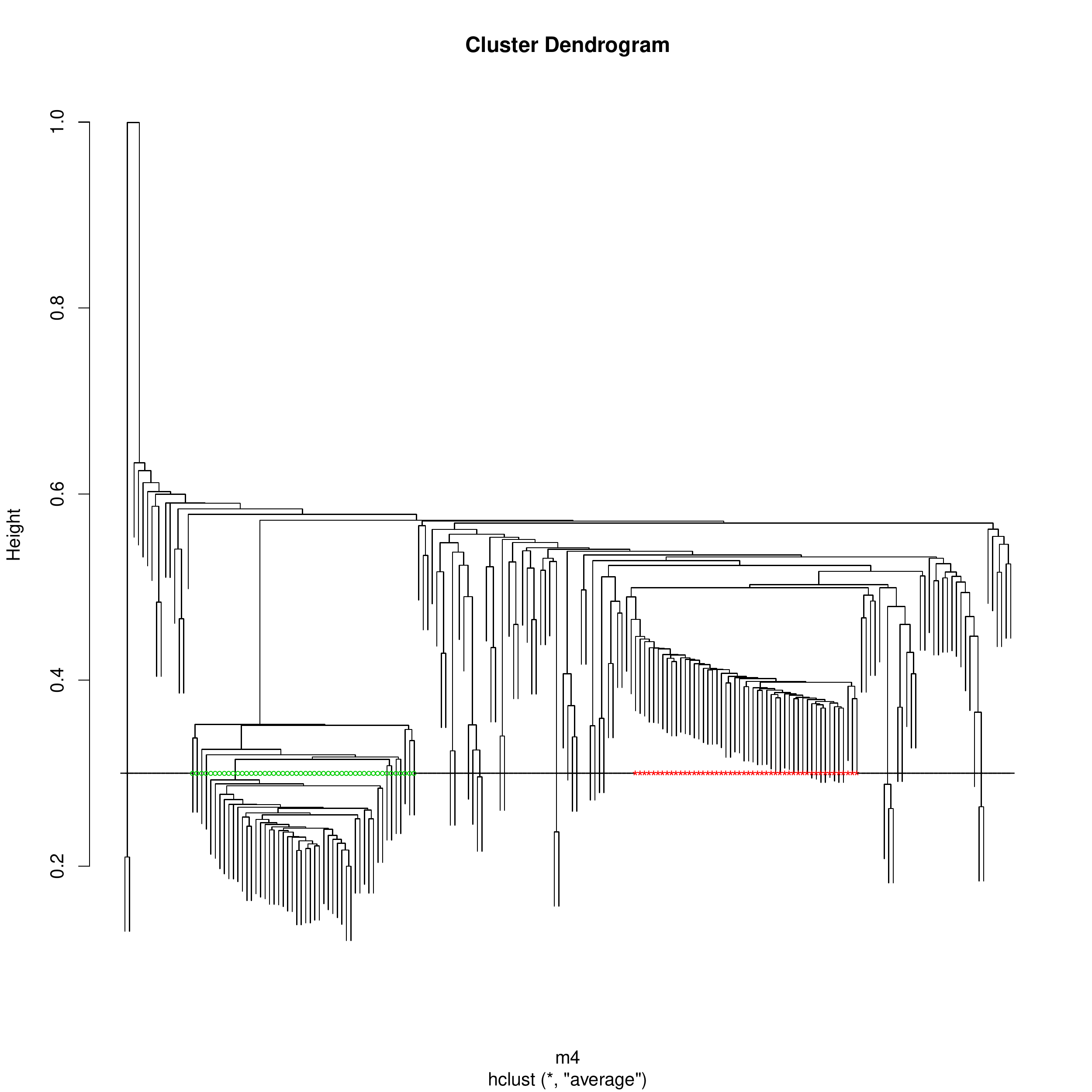}
  \caption{%
    \label{fi:dendro-2-avg.4}
    Dendrogram for Experiment 2, using average clustering, for
    discounting factor $.4$.  Black dots mark \texttt{arxiv} papers,
    green marks SCIgen papers, and \texttt{automogensen} papers are
    marked red.}
\end{figure}

\begin{figure}
  \includegraphics[width=1.2\linewidth, trim=30 70 0 70, clip, angle=-90]{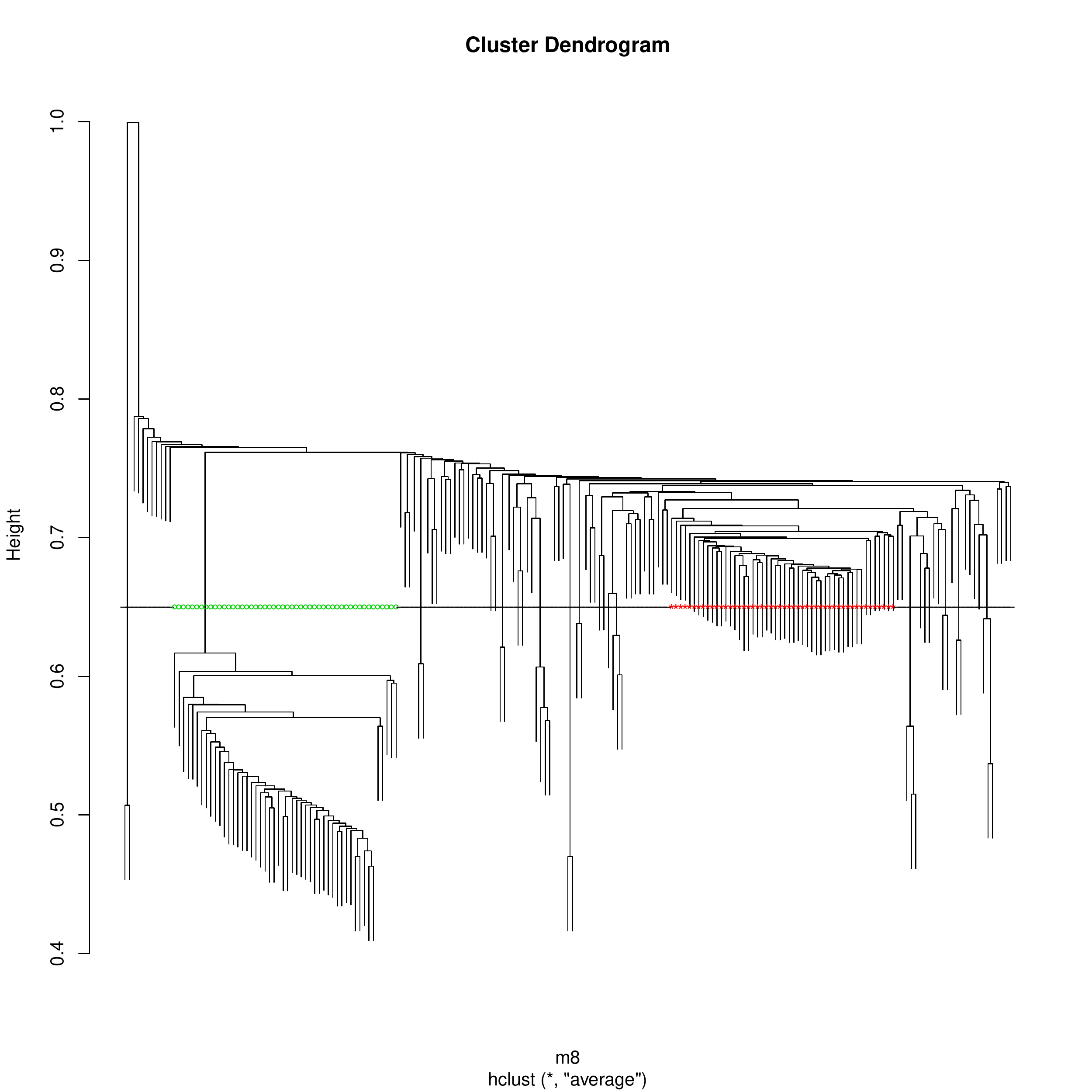}
  \caption{%
    \label{fi:dendro-2-avg.8}
    Dendrogram for Experiment 2, using average clustering, for
    discounting factor $.8$.  Black dots mark \texttt{arxiv} papers,
    green marks SCIgen papers, and \texttt{automogensen} papers are
    marked red.}
\end{figure}

\begin{figure}
  \includegraphics[width=1.2\linewidth, trim=30 70 0 70, clip, angle=-90]{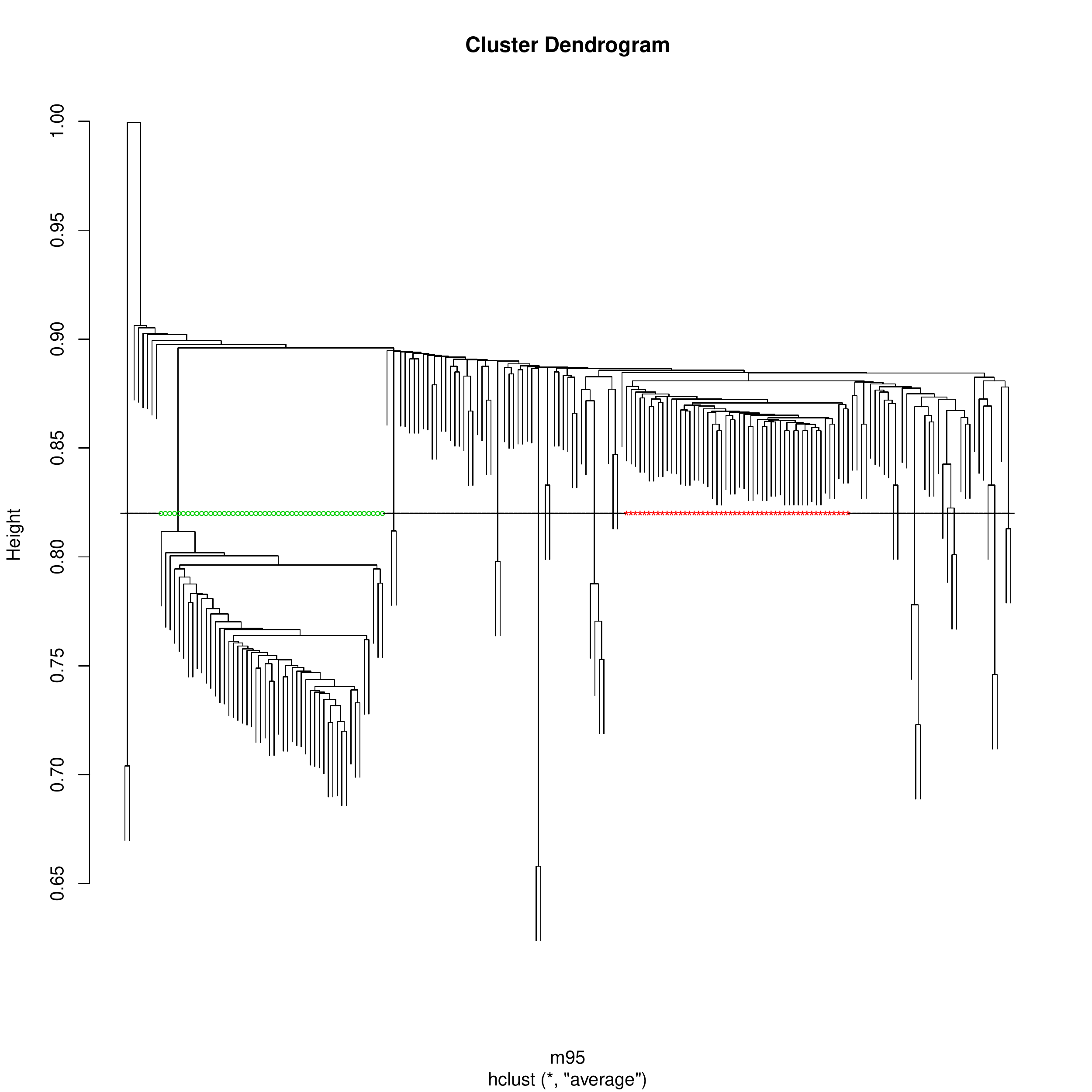}
  \caption{%
    \label{fi:dendro-2-avg.95}
    Dendrogram for Experiment 2, using average clustering, for
    discounting factor $.95$.  Black dots mark \texttt{arxiv} papers,
    green marks SCIgen papers, and \texttt{automogensen} papers are
    marked red.}
\end{figure}

\begin{figure}
  \includegraphics[width=1.2\linewidth, trim=30 70 0 70, clip, angle=-90]{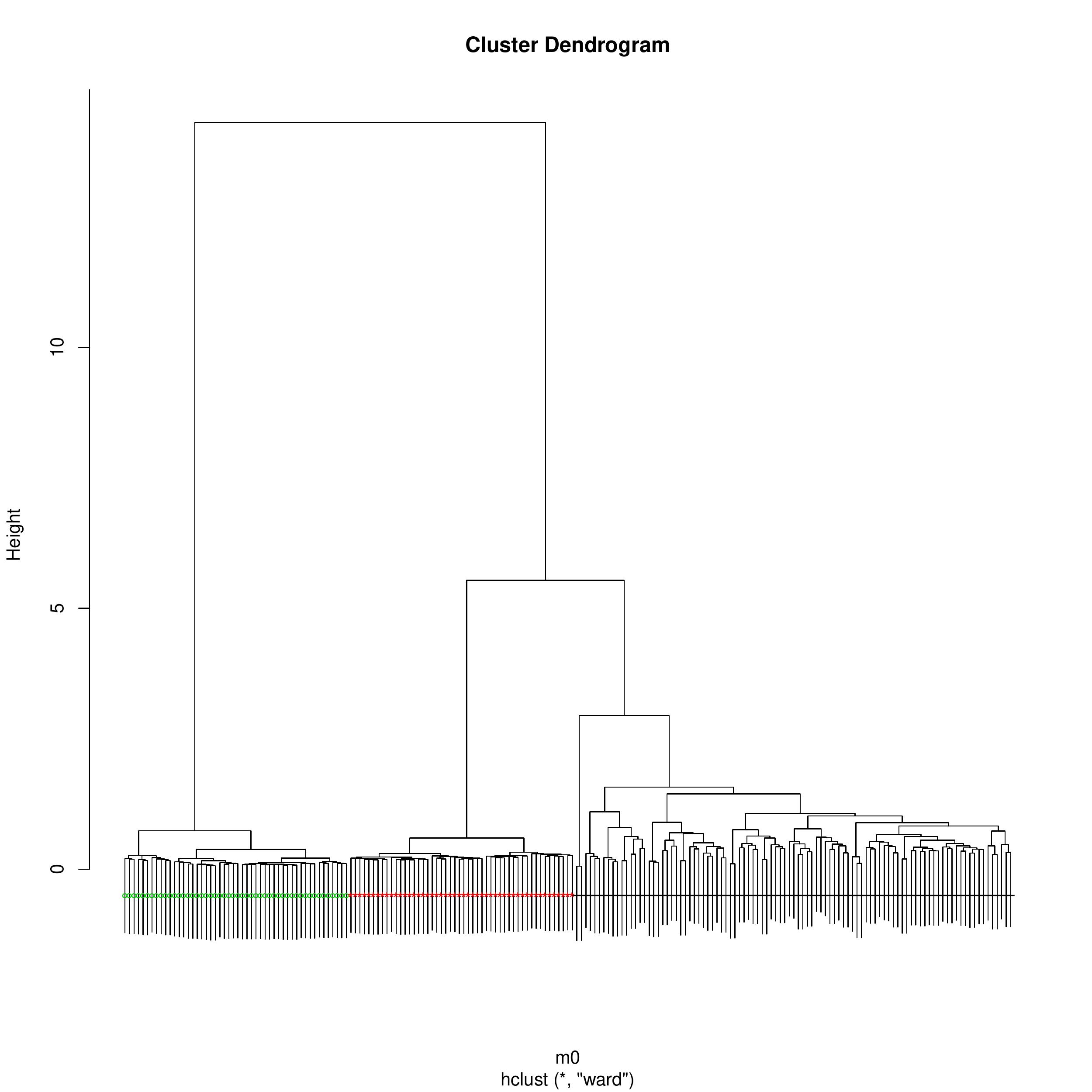}
  \caption{%
    \label{fi:dendro-2-ward.0}
    Dendrogram for Experiment 2, using Ward clustering, for discounting
    factor $0$.  Black dots mark \texttt{arxiv} papers, green marks
    SCIgen papers, and \texttt{automogensen} papers are marked red.}
\end{figure}

\begin{figure}
  \includegraphics[width=1.2\linewidth, trim=30 70 0 70, clip, angle=-90]{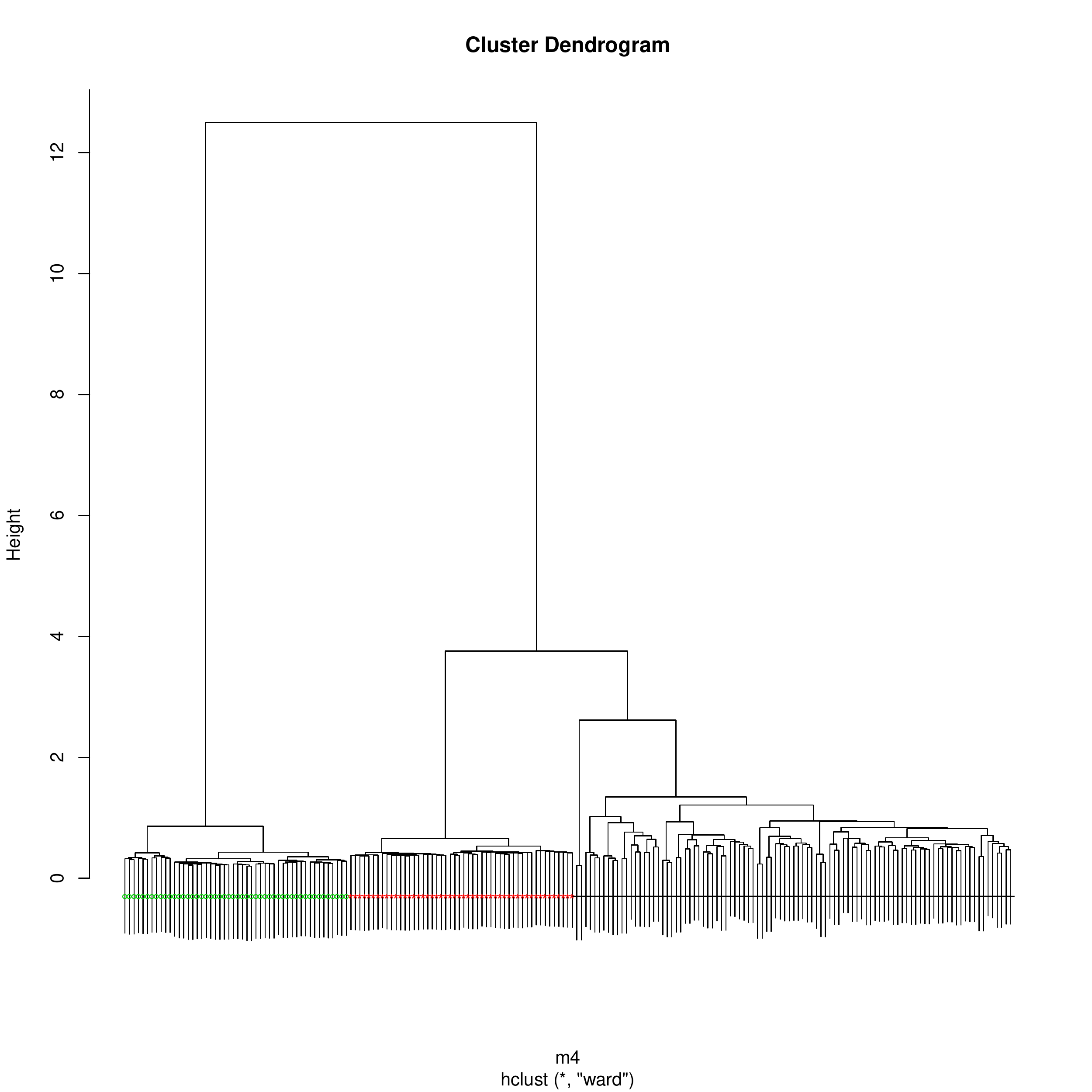}
  \caption{%
    \label{fi:dendro-2-ward.4}
    Dendrogram for Experiment 2, using Ward clustering, for discounting
    factor $.4$.  Black dots mark \texttt{arxiv} papers, green marks
    SCIgen papers, and \texttt{automogensen} papers are marked red.}
\end{figure}

\begin{figure}
  \includegraphics[width=1.2\linewidth, trim=30 70 0 70, clip, angle=-90]{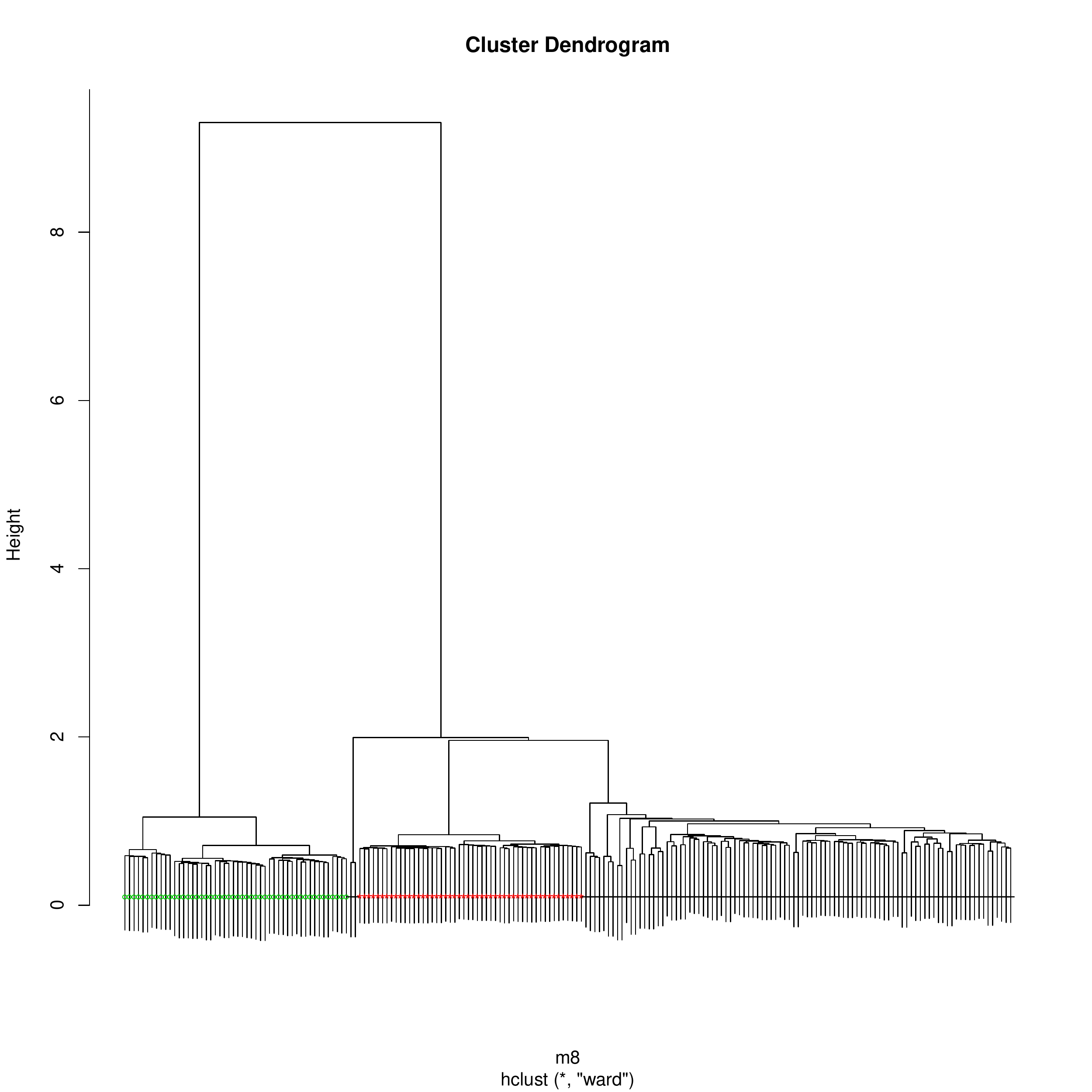}
  \caption{%
    \label{fi:dendro-2-ward.8}
    Dendrogram for Experiment 2, using Ward clustering, for discounting
    factor $.8$.  Black dots mark \texttt{arxiv} papers, green marks
    SCIgen papers, and \texttt{automogensen} papers are marked red.}
\end{figure}

\end{document}